\newcommand{\ra}[1]{\renewcommand{\arraystretch}{#1}}
\newtheorem{remark}{Remark}
\newcommand{\Y}{{X}}
\newcommand{\A}{{A}}
\newtheorem{theorem}{Theorem}
\newtheorem{lemma}{Lemma}
\title{Sequential Fair Resource Allocation under a  Markov Decision Process Framework}
\author{
Parisa Hassanzadeh$^1$\and
Eleonora Krea\v{c}i\'{c}$^1$\and
Sihan Zeng$^{2}$\and
Yuchen Xiao$^1$\And
Sumitra Ganesh$^1$
\affiliations
$^1$J.P. Morgan AI Research,\;
$^2$Georgia Institute of Technology
\emails
\{parisa.hassanzadeh, eleonora.kreacic, yuchen.xiao, sumitra.ganesh\}@jpmorgan.com,
szeng30@gatech.edu}
\begin{document}

\maketitle

\begin{abstract}
We study the sequential decision-making problem of allocating a divisible resource to agents that reveal their stochastic demands on arrival over a finite horizon. Our goal is to design fair allocation algorithms that exhaust the available resource budget. 
This is challenging in sequential settings where information on future demands is not available at the time of decision-making. We formulate the problem as a discrete time Markov decision process (MDP). We propose a new algorithm, SAFFE, that makes fair allocations with respect to all  demands revealed over the horizon by accounting for expected future demands at each arrival time. Using the MDP formulation, we show that the allocation made by SAFFE optimizes an upper bound of the Nash Social Welfare fairness objective. We further introduce SAFFE-D, which improves SAFFE by more carefully balancing the trade-off between the current revealed demands and the future potential demands based on the uncertainty in agents' future demands. We bound its gap to optimality using concentration bounds on total future demands. On synthetic and real data, we compare SAFFE-D against a set of baseline approaches, and show that it leads to more fair and efficient allocations and achieves close-to-optimal performance. 
\end{abstract}

%%% Use this command to specify a few keywords describing your work.
%%% Keywords should be separated by commas.

% \keywords{fair division, online resource allocation, online decision-making}

%%%%%%%%%%%%%%%%%%%%%%%%%%%%%%%%%%%%%%%%%%%%%%%%%%%%%%%%%%%%%%%%%%%%%%%%

%auto-ignore
\section{Introduction} \label{sec:intro}
The problem of multi-agent resource allocation arises in numerous disciplines including economics \cite{cohen1965theory}, power system optimization \cite{yi2016initialization,nair2018multi}, and cloud computing \cite{balasubramanian2007dtn,abid2020challenges}. It comes in various flavors depending on the nature of resources, the agents' behavior, the timing of allocation decisions, and the overall objective of the allocation process. A common element shared by various settings is the existence of a limited resource that needs to be distributed to various agents based on their reported requests.

% Resource can come in \textit{indivisible} units, where a single unit of resource must be allocated to at most one agent or it can be \textit{divisible} if allocation can equal any amount of resource. Agents can be \textit{strategic}, i.e. they might misreport their requests in order to secure preferred allocation. In the \textit{non-strategic} setting, agents can be \textit{truthful} i.e. they take any allocation that is not larger than their request, or \textit{unreliable} i.e. although they do not misreport their demands in order to secure preferred allocation, they take only a (random) fraction of their allocation.

There are various objectives that one might wish to optimize in the context of resource allocation. While the goal may be to maximize efficiency (e.g, minimize the leftover resource), other objectives combine efficiency with various notions of \textit{fairness}.
%Efficiency maximization only aims at reducing the leftover resource, while other objectives combine efficiency with various notions of \textit{fairness}. 
Among different fairness metrics,  Nash Social Welfare (NSW) is a well-known objective defined as the geometric mean of the agents' ``satisfaction'' with their allocation \cite{nash1950bargaining,kaneko1979nash}. In the \textit{offline} setting where we observe all requests prior to making an allocation decision,  the NSW solution enjoys favorable properties including being \textit{Pareto-efficient} (for any other allocation there would be at least one agent who is worse off compared to the current one), \textit{envy-free} (no agent prefers any other agent's allocation), and \textit{proportionally fair} (every agent gets their fair share of the resource) \cite{varian1973equity}, and thus exhausts the total budget under a limited resource.
% \cite{varian1976twoproblems}

%In this work, we consider the much more challenging \textit{online} (or \textit{sequential}) setting where  agents place requests in a sequential manner and allocation decisions have to be made instantaneously. As the decisions cannot be modified once made, every allocation affects all future allocations which can further compromise fairness and/or efficiency. \textcolor{blue}{Interesting applications include allocating stock inventory on a trading platform or supplier-reseller settings (e.g., automaker-dealer) where the degree of fairness impacts the relationship.} In order to achieve fairness we need to reserve the resource for anticipated future demands, which can lead to wasted resources if the expected future agents do not arrive, degrading efficiency.

In this work, we consider the much more challenging \textit{online} (or \textit{sequential}) setting where  agents place requests in a sequential manner, and allocation decisions have to be made instantaneously and are irrevocable. %As the decisions cannot be modified once made, every allocation affects all future allocations which can further compromise fairness and/or efficiency. 
In order to achieve fairness we need to reserve the resource for anticipated future demands, which can lead to wasted resources if the expected future agents do not arrive. %, degrading efficiency. 
Unlike previous work on sequential allocation \cite{lien2014sequential,sinclair2022sequential}, we consider a more general setup where agents can arrive several times over the horizon, %requesting the resource, 
and their demand size is not restricted to a finite set of values. This setting has important applications
such as  allocating stock inventory on a trading platform or supplier-reseller settings (e.g., automaker-dealer) where the degree of fairness impacts their relationship  \cite{kumar1995effects,yilmaz2004supplier}.

\paragraph{Contributions}

We consider a general sequential setting where truthful agents can submit multiple demands for divisible resources over a fixed horizon, and the goal is to ensure fairness over this horizon. There is no standard notion of fairness in the sequential setting, as it may be impossible to satisfy all of the favorable fair properties simultaneously \cite{sinclair2022sequential}. Therefore, online allocation algorithms are commonly evaluated in comparison to their  offline (or \textit{hindsight}) counterpart \cite{gallego2015online,sinclair2022sequential}. We formulate the sequential problem in terms of maximizing expected NSW in hindsight, and design algorithms that explicitly optimize for this. 

The multiple-arrival setting requires being mindful of all past and future requests when responding to an agent's current demand. We propose a new algorithm SAFFE-D, which determines  fair allocations by accounting for all expected future \textit{unobserved} demands (e.g., based on historical information on agent demands) as well as each agent's past allocations. In addition, we introduce a tunable \textit{discounting} term in SAFFE-D, which improves both efficiency and fairness. It uses the uncertainty of future demands to balance the contention between allocating the resource now vs reserving it for the future across the entire  horizon.  
%In addition, we introduce a tunable \textit{discounting} term in SAFFE-D that uses the uncertainty of future demands to improve both efficiency and fairness by  preventing excessively reserving the resource for over-estimated future demands. %or to better balance  over-reserving vs over-allocating across the horizon. %The parameter is tuned with the use of concentration bounds around expected future demands.
%We propose a new regularization in SAFFE-D based on uncertainty of future demands, which balances over-reserving vs over-allocating across the horizon. 
We provide theoretical upper bounds on the sub-optimality gap of SAFFE-D using concentration bounds around the expected future demands, which also guide us in tuning the discounting term of SAFFE-D. Finally, we use  numerical simulations and real data to illustrate the close-to-optimal performance of SAFFE-D in different settings, and we compare it against existing approaches and a reinforcement learning policy.

\section{Problem Formulation} \label{sec:problem}
We consider a supplier that has a divisible resource with a limited budget size $B$, and $N$ truthful agents that arrive sequentially over $T$ time steps requesting (possibly fractional) units of the resource. At time $t\in \{1,\dots,T\}$, agent $i\in \{1,\dots,N\}$ arrives and reveals its demand $\Y^t_{i}\in \mathbb{R}_{\geq0}$ sampled from distribution $P_{X_i^t |\Y_i^{1},\dots, \Y_i^{t-1}}$. We assume that each agent has at least one demand over the horizon $T$. The supplier observes demands $\mathbf{\Y}^t=(\Y^t_{1},\dots, \Y^t_{N})$, and makes an  allocation $\mathbf{\A}^t=(\A^t_{1},\dots, \A^t_{N})$, where $\A^t_{i}\in \mathbb{R}_{\geq0}$ denotes the resource amount allocated to agent $i$. We assume that the allocations are immediately taken by the agents and removed from the inventory. We also assume that the setting is \textit{repeated}, i.e., after $T$ time steps, a new budget is drawn and the next allocation round of $T$ time steps starts. 

Agent $i$ has a utility function $u(\mathbf{\A}_i,\mathbf{X}_i)$, representing its satisfaction with the allocation $\mathbf{\A}_i=(\A_i^1,\dots,\A_i^T)$ given its (latent) demands. The utility function is a non-decreasing non-negative concave function.  In this work, we consider the following utility function, where an agent's utility linearly increases with its total allocation up to its total request  
\begin{align}
        u(\mathbf{\A}_i,\mathbf{X}_i)=\sum_{t=1}^T\min\left\{ \A_i^t, \,   \Y_i^t\right\}. \label{eq:utility}
    \end{align}
%\textcolor{red}{Q: Maybe we don't need to define $\theta$, instead we directly use X, unless we introduce the multi-resource case?}    
%\textcolor{blue}{Discussion point for P\&E: Do we use ratio (normalized) or non-ratio? Normalized is the true filling ratio definition and in the single resource case normalized and non-normalized will have same result for NSW. But using normalized will make it harder for some proofs and definitions}
An agent with the utility function in \eqref{eq:utility} only values an allocation in the time step it requests the resource, and not in earlier or later steps, which is suitable in settings where the allocation process is time sensitive and the supplier is not able to delay the decision-making outside of the current time step. If all agent demands $\mathbf{X}_1,\dots, \mathbf{X}_N$ are known to the supplier at the time of decision-making, they can be used for determining the allocations. This setting is often referred to as  \textit{offline}. However, in the \textit{online} or \textit{sequential} setting, the agent demands are gradually revealed over time, such that the supplier only learns about $X_1^t,\dots, X_N^t$ at time $t$.  We present the paper in terms of a single resource; however, the setting  extends to  multiple resources as described in Appendix~\ref{app:multi-resource}.

 \paragraph{\textbf{Notation}} 
$(x)^+ = \max\{x,0\}$. For vectors $\mathbf{X}$ and $\mathbf{Y}$, we use $\mathbf{X}\geq\mathbf{Y}$ to denote $X_i\geq Y_i$ for each  $i$. $\mathbf{0}$ denotes a zero vector. $N(\mu, \sigma^2)$ denotes the Normal distribution with mean $\mu$ and variance $\sigma^2$.

\subsection{Fairness in Allocation Problems}
The supplier aims to allocate the resource efficiently, i.e., with minimal leftover at the end of horizon $T$, and in a fair manner maximizing  NSW. The NSW objective is defined as $\prod_{i=1}^{N} u(\A_i,\mathbf{X}_i)^{w_i}$, where $w_i\in\mathbb{R}_{+}$ reflects the weight assigned to agent $i$  by the supplier. NSW is a balanced compromise between the utilitarian welfare objective, which maximizes the utility of a larger number of agents, and the egalitarian welfare objective, which maximizes the utility of the worst-off agent. Since  
\begin{align}
     \arg&\max_{\mathbf{A}_i}  \prod_{i=1}^N  u(\mathbf{A}_i,\mathbf{X}_i)^{w_i}   = \arg\max_{\mathbf{A}_i}  \sum_{i=1}^N w_i\log  u(\mathbf{A}_i,\mathbf{X}_i) ,\notag
\end{align}
the logarithm of NSW is often used as the objective function in allocation problems, such as in the Eisenberg-Gale program \cite{eisenberg1959consensus}. We refer to this as the log-NSW objective.
 
In the sequential setting, it is not guaranteed that there always exists an allocation that is simultaneously  Pareto-efficient, envy-free, and proportional \cite[Lemma 2.3]{sinclair2020sequential}. Motivated by the properties of the NSW objective in the offline setting, and the fact that our repeated setting can be viewed as multiple draws of the offline setting, we use a modified NSW objective for the sequential setting. Our goal in the sequential setting is to find an allocation that maximizes the log-NSW in expectation. Specifically, 
 \begin{align}
 \arg\max_{\mathbf{A}_i}  \sum_{i=1}^N w_i\mathbb{E}_{\mathbf{X}_i} \Big[\log u(\mathbf{A}_i,\mathbf{X}_i)\Big]. \label{eq:nsw hindsight}
 \end{align}

With the goal of measuring the performance of an allocation in the sequential setting, \cite{sinclair2020sequential} introduces approximate fairness metrics $\Delta \mathbf{A}^\text{max}$ and $\Delta \mathbf{A}^\text{mean}$.
Let $\mathbf{A}_i^{\text{online}}$ denote the allocation vector of agent $i$ given by an online algorithm subject to latent demands, and let  $\mathbf{A}_i^{\text{hindsight}}$ denote the allocation vector in hindsight after all  demands are revealed, defined as \eqref{eq:hindsightA} in Sec.~\ref{sec:offline}. %\EK{Maybe to emphasize this is oracle setting, i.e. sequential with no stochasticity?} 
The expected difference between the overall allocations for each agent can be used to measure the fairness of the online algorithm. 
%computed as\footnote{$\Delta \mathbf{A}$ is defined for the additive utility function in~\eqref{eq:utility}, and may need to be redefined for alternative utilities.}
Let $\Delta A_i\coloneqq\Big| \sum_{t=1}^{T}{A}_i^{t,\text{hindsight}}-\sum_{t=1}^{T}{A}_i^{t,\text{online}} \Big|$. Then,
\begin{align}
\Delta \mathbf{A}^\text{max}= \mathbb{E}\Big[\max_{i} \Delta A_i\Big], \; \Delta \mathbf{A}^\text{mean}=\frac{1}{N}\sum_{i=1}^N \mathbb{E}[  \Delta A_i], \label{eq:deltaA}
\end{align} 
are concerned with the worst-off agent and average agent in terms of cumulative allocations in hindsight, respectively\footnote{$\Delta \mathbf{A}$ is defined for the additive utility function in~\eqref{eq:utility}, and may need to be redefined for alternative utilities.}. While our optimization objective is not to minimize $\Delta \mathbf{A}^\text{max}$ or $\Delta \mathbf{A}^\text{mean}$, we use these  metrics to evaluate the fairness of online allocation algorithms in the experiments in Sec.~\ref{sec:experiments}.

\subsection{Markov Decision Process Formulation}\label{sec:mdp}
%\section{Markov Decision Process Formulation}\label{sec:mdp}
Determining the optimal allocations under the NSW objective is  a sequential decision-making problem as the allocation in one step affects the budget and allocations in future steps. We formulate the problem as a finite-horizon total-reward Markov Decision Process (MDP) modeled as a tuple  $<\{\mathcal{S}_t,\mathcal{A}_t,P_t,R_t\}_{t=1,\dots,T}>$.  $\mathcal{S}_t$ denotes the underlying time-dependent state space, $ \mathcal{A}_t$ is the action space, $P_t : \mathcal{S}_t \times \mathcal{A}_t\times \mathcal{S}_{t+1} 
\rightarrow \mathbb{R}_{\geq0}$ describes the state transition dynamics conditioned upon the previous state and action, 
$R_t : \mathcal{S}_t \times \mathcal{A}_t
\rightarrow \mathbb{R}_{\geq0}$ is
a non-negative reward function, and $T$ is the horizon over which the resource is allocated. The goal of the supplier is to find an allocation  policy 
$\pi=\{\pi_1,\dots,\pi_T\mid\pi_t:  \mathcal{S}_t\rightarrow \mathcal{A}_t\}$
mapping the state to an action, in order to maximize the expected sum of rewards $\mathbb{E}[\sum_{t=1}^T R_t(s_t, \pi_t(s_t))]$. Next, we describe the components of the MDP in details. 
\paragraph{\textbf{State Space}}
 The state space $\mathcal{S}_t$ is time-dependent, and the state size increases with time step $t$ since the state captures the past demand
and allocation information. Specifically, the state at step $t$ is defined as $s_t = (\mathbf{\Y}^{1:t},\, \mathbf{\A}^{1:t-1},\,B^t)$, where $\mathbf{\Y}^{1:t}\coloneqq (\mathbf{\Y}^1,\dots, \mathbf{\Y}^t)$, $\mathbf{\A}^{1:t}\coloneqq (\mathbf{\A}^1,\dots, \mathbf{\A}^t)$, and 
\begin{align}
B^t = \begin{cases}
B^{t-1} -\sum_{i=1}^N A_{i}^{t-1} &\quad t\geq 1 \\
 B &\quad t=1
\end{cases}    
\end{align}

\paragraph{\textbf{Action Space}}
The actions space is state and time-dependent. For any $s_t=(\mathbf{\Y}^{1:t},\, \mathbf{\A}^{1:t-1},\,B^t)\in \mathcal{S}_t$, we have
\begin{align}
    \mathcal{A}_t = \{\mathbf{A}^t \in \mathbb{R}^{N}:\; \sum_{i=1}^N A^t_i\leq B^t\}\label{eq:action-space}
\end{align}
The state and action space are both continuous and therefore infinitely large. However, $\mathcal{A}_t$ is a compact polytope for any $s_t\in\mathcal{S}_t$, and $\mathcal{S}_t$ is compact if the requests are bounded.

\paragraph{\textbf{State Transition Function}}

Given state $s\in \mathcal{S}_t$ and action $a\in\mathcal{A}_t$, the system transitions to the next state $s'\in \mathcal{S}_{t+1}$ with probability \begin{align}
P(s,a,s')=\text{Prob}(s_{t+1}=s'\mid s_{t}=s,a_{t}=a),
\end{align}
where $X_i^{t+1}\sim P_{X_i^{t+1}|X_i^1,\dots,X_i^t}$.

\paragraph{\textbf{Reward Function}}
The reward at time step $t\in\{1,\dots,T\}$, is defined as follows
\begin{align}
    R_t(s_t, \pi_t(s_t)) =& \sum_{i=1}^N  \mathbbm{1} \{X_i^t >0\} \;.\; w_i (U_i^t - U_i^{t-1})    , \label{eq:step-reward}
\end{align}
where
\begin{align}
U_i^t &=
  \log \left(\sum_{\tau=1}^t \min\{A_i^{\tau}, X_i^{\tau}\} + \epsilon \right), \; t\in\{1,\dots,T\},\label{eq:def_U}
\end{align}
$U_i^{0}=0$, and $\epsilon$ is a small value added for to ensure values are within the domain of the log function\footnote{For $\epsilon\ll1$, this will lead to a maximum error of $\epsilon$ for $x\geq 1.$}. The reward $R_t$ denotes the weighted sum of incremental increases in each agent's utility at time $t$\footnote{For agent $i$, we have $\sum_{t=1}^T \mathbbm{1} \{X_i^t >0\} (U_i^t - U_i^{t-1})  =  \sum_{t=1}^T(U_i^t - U_i^{t-1}) = U_i^T = \log (\sum_{t=1}^T \min\{A_i^{t}, X_i^{t}\} + \epsilon )$, which matches the agent's log-utility as $\epsilon\rightarrow 0$.}. The indicator function $  \mathbbm{1} \{X_i^t >0\}$ ensures that we only account for the agents with a demand at time $t$. Then, the  expected sum of rewards over the entire horizon $T$ is equivalent to the expected log-NSW objective defined in \eqref{eq:nsw hindsight} for $\epsilon\rightarrow0$.

At time $t$, with state $s_t\in \mathcal{S}_t$ and action (allocation) $\mathbf{A}^t\in \mathcal{A}_t$, the optimal Q-values satisfy the  Bellman optimality equation \cite{bellman1966dynamic}:
\begin{align}
    Q_t(s_t,\mathbf{A}^t)&= 
    R_t(s_t, \mathbf{A}^t)
    +\mathbb{E}\Big[\max_{\mathbf{A}^{t+1} \in \mathcal{A}_{t+1}}Q_{t+1}(s_{t+1},\mathbf{A}^{t+1})\Big].\label{eq:bellman}
\end{align}
We denote the optimal policy corresponding to Eq.~\eqref{eq:bellman} by  $\pi_t^\star$. Then, the optimal allocation is the solution to
\begin{align}
    \arg\max_{\mathbf{A}^{t}} Q_t(s_t,\mathbf{A}^t). \label{eq:optalg_timet}
\end{align}

 We  show in Appendix~\ref{app:existance} that an optimal policy exists for the MDP, and it can be derived by recursively solving the  Bellman optimality equation \cite{bellman1966dynamic} backward in time. This quickly becomes computationally intractable, which motivates us to find alternative solutions that trade-off sub-optimality for computational efficiency. In Sec.~\ref{sec:heuristic}, we introduce an algorithm that uses estimates of future demands to determine the allocations, and we discuss a learning-based approach in Sec.~\ref{sec:exp-RL}.

\vspace{-5pt}
\section{Offline (Hindsight) Setting} \label{sec:offline}
If the supplier could postpone the decision-making till the end of the horizon $T$, it would have perfect knowledge of all demands $\mathbf{X}^1,\dots, \mathbf{X}^T$. Let $\widetilde{\Y}_i = \sum_{\tau=1}^T X_i^\tau$ denote the total demands of agent $i$. Then, the supplier solves the following convex program, referred to as the Eisenberg-Gale program \cite{eisenberg1961aggregation}, to maximize the  log-NSW objective in \eqref{eq:nsw hindsight} for allocations $\widetilde{\mathbf{\A}}=(\widetilde{\A}_1,\dots,\widetilde{\A}_N)$, 
\begin{equation}
\begin{aligned}
    \max_{\widetilde{\mathbf{A}}\geq \mathbf{0} }  \sum_{i=1}^N  w_i\log \Big( u(\widetilde{A}_i,\widetilde{X}_i)\Big) \quad
    \text{s.t. }
    &\sum_{i=1}^{N}\widetilde{A}_i\leq B. 
\end{aligned}\label{eq:obj_offline0}
\end{equation}
%The indicator function $  \mathbbm{1} \{\widetilde{X}_i^t >0\}$ ensures that we only account for the agents that have demands at least in one of the the $T$ time windows, and for which $\widetilde{X}_i>0$\footnote{With a slight abuse of notation, for calculating the logarithm of NSW, we assume that $0\times -\infty = 0$ to ensure that the agent is not included if it has no demand and consequently gets no allocation.}. 
While the optimal solution to \eqref{eq:obj_offline0} may not be achievable in  the sequential setting, it provides an upper bound on the log-NSW achieved by any online algorithm, and serves as a baseline for comparison. With the utility function in~\eqref{eq:utility}, allocating resources beyond the agent's request does not increase the utility. Therefore, solving \eqref{eq:obj_offline0} is equivalent to the following
\begin{equation} 
\begin{aligned}
    \max_{\widetilde{\mathbf{A}} } \;& \sum_{i=1}^N  w_i\log (\widetilde{A}_i) \\
    \text{s.t. }
    & 0 \leq \widetilde{A}_i\leq \widetilde{X}_i, \; \forall i \quad
    \sum_{i=1}^{N}\widetilde{A}_i\leq B. 
\end{aligned}\label{eq:obj_offline}
\end{equation}
Then, any  distribution of agent $i$'s allocation $\widetilde{A}_i$ across the $T$ time steps that satisfies the demand constraint at each step, would be an optimal allocation \textit{in hindsight} $\mathbf{A}_i^\text{hindsight}$, i.e.,
\begin{equation}
\mathbf{A}_i^\text{hindsight} \in \Big\{ (A_i^1,\dots, A_i^T):0\leq A_i^t\leq X_i^t, \forall t, \; \sum_{\tau=1}^T A_i^t = \widetilde{A}_i   \Big\}.\label{eq:hindsightA}
\end{equation}
The optimal solution to \eqref{eq:obj_offline} takes the form $\widetilde{A}_i^{\star}=\min\{\widetilde{X}_i,\mu\}$, where 
$\mu$ is a function of budget $B$ and demands $\widetilde{\mathbf{X}}$ such that $\sum_{i=1}^N \widetilde{A}_i=B$. The solution can be efficiently derived by the water-filling algorithm in Algorithm~\ref{alg:waterfilling-base},  and the threshold $\mu$ can be interpreted as the water-level shown in Fig.~\ref{fig:waterfill}.
% In line 3 of Algorithm~\ref{alg:waterfilling-base}, the agents are ordered according to their demands and weights, such that agents with smaller demands and higher weights are prioritized. This ordering determines which agent is used to compute the water-level $\mu$. For each selected agent $i_j$, the condition in line 7 determines whether there is enough budget to fully satisfy agent $i_j$'s demand $\widetilde{X}_{i_j}$. If there is enough budget, agent $i_j$ receives its full request (line 11) and the supplier moves on to the next agent in order. Otherwise, the available budget is proportionally  divided among the remaining agents (line 8) while ensuring $\sum_{k=j}^N\widetilde{A}_{i_k}=B$. 
\begin{figure}[!h]
\centering
\begin{subfigure}{0.47\linewidth} \centering
 \includegraphics[width=\linewidth]{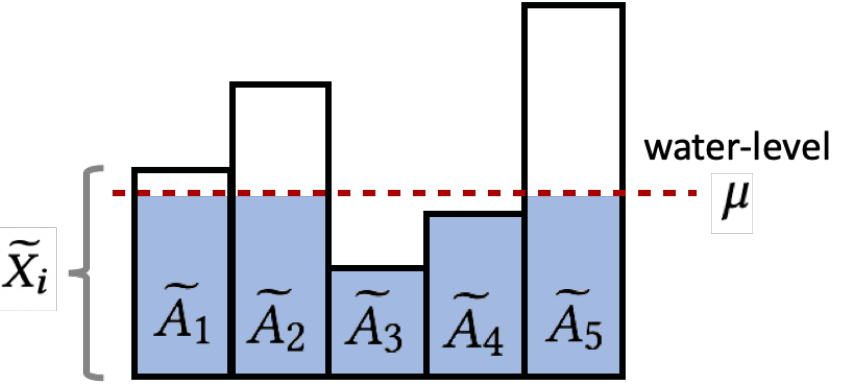}
\caption{ }
\label{fig:waterfill}
\end{subfigure}%
%
    %  \hfill
\begin{subfigure}{0.47\linewidth} \centering
\includegraphics[width=\linewidth]{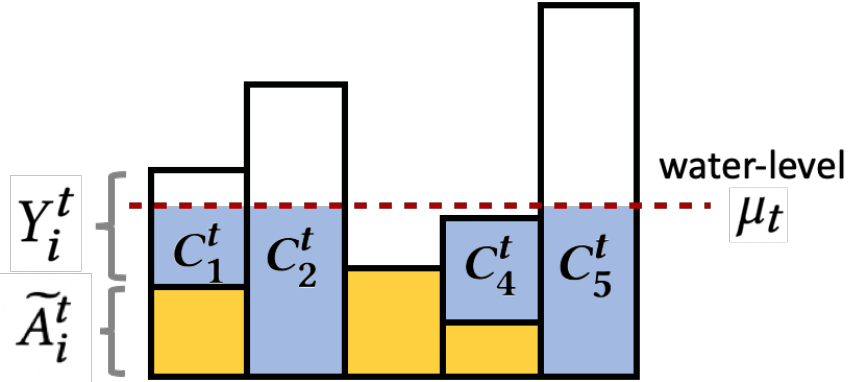}
\caption{ }
\label{fig:waterfill-modified}
\end{subfigure}%
\vspace{-3mm}
\caption{Water-filling solution for equal unit weights: Each bar shows the demands or allocations of an agent. (a) In hindsight, the resource is allocated using  water-level $\mu$ determined such that the total allocations are equal to the budget size. (b) In the sequential setting, at time $t$, the cumulative prior allocations and total expected future demands are accounted for when determining  water-level $\mu_t$.}
\label{fig:three graphs}
\end{figure}
\begin{algorithm}
\caption{\small Water-filling algorithm with weights for solving \eqref{eq:obj_offline}}
\label{alg:waterfilling-base}
\begin{algorithmic}[1]
\STATE{Input: number of agents $N$, resource budget $B$, demand vector $\widetilde{\mathbf{X}}\in\mathbb{R}^N$, weight vector $\mathbf{w}\in\mathbb{R}^N$
}
\STATE{Output: allocation vector $\widetilde{\mathbf{A}}\in\mathbb{R}^N$}
\STATE{Find an ordered index set $\{i_1,\dots,i_N\}$ by sorting the agents such that $\frac{\widetilde{X}_{i_1}}{w_{i_1}}\leq\dots\leq \frac{\widetilde{X}_{i_N}}{w_{i_N}}$}
\STATE{$\alpha_j = \frac{w_{i_j}}{w_{i_j}+\dots+w_{i_N}}$ for $j=1,\dots, N$}
\STATE{$j \leftarrow 1$}
\WHILE{$j\leq N$ and $B > 0$}
\IF{$B \leq \widetilde{X}_{i_j}/\alpha_j $ }
     \STATE{$\widetilde{A}_{i_k}=\alpha_k B$ for $k=j,\dots, N$}
  \STATE{$\textbf{break}$}
\ELSE
  \STATE{$\widetilde{A}_{i_j}=\widetilde{X}_{i_j}$}  %\hfill \COMMENT{Allocate client $i_j$'s full demand}
  \STATE{$B \leftarrow B - \widetilde{A}_{i_j}$}
  \STATE{$j\leftarrow j+1$} 
\ENDIF
\ENDWHILE
\end{algorithmic}
\end{algorithm}
%auto-ignore
% \vspace{-2em}
\section{Sequential Setting: Heuristic Algorithm} \label{sec:heuristic}
In this section, we propose an intuitive and computationally efficient  algorithm, named \textit{Sequential Allocations with Fairness based on Future Estimates} (SAFFE), and its variant SAFFE-Discounted. This online algorithm uses Algorithm~\ref{alg:waterfilling-base} as a sub-routine to compute the allocations at each step $t$.

\subsection{SAFFE Algorithm}\label{subsec:saffe}
% We propose an intuitive online algorithm referred to as \textit{Sequential Allocations with Fairness based on Future Estimates (SAFFE)}, which uses the hindsight solution corresponding to the offline setting in Sec.~\ref{sec:offline}. By 
SAFFE is based on the simple principle of 
substituting unobserved future requests with their expectations.
% derived from demand distributions estimated from historical data, 
By doing so, we convert the sequential decision-making problem into solving the offline problem at each time step. With the algorithm formally stated in Algorithm~\ref{alg:SAFFE}, at  time step $t$ we use the expected future demands to determine the total resources we expect to allocate to each agent by time $T$. This allows us to reserve a portion of the available budget for future demands. Specifically, at $t=1,\dots, T$, we solve the following problem
\begin{equation} 
\begin{aligned}
    \max_{\mathbf{C}^t} \;& \sum_{i=1}^N \mathbbm{1} \{Y_i^t >0\} \;.\; w_i\log (\widetilde{A}_i^t + C_i^t) \\
    \text{s.t. }\;
    & 0 \leq C_i^t\leq Y_i^t , \; \forall i, \quad \sum_{i=1}^{N}C_i^t\leq B^t,
\end{aligned}\label{eq:obj_online}
\end{equation}

\vspace{-5mm}
\begin{align}
\text{where \qquad} Y_i^t = X_i^t+\sum_{\tau=t+1}^{T} \mathbb{E}[X_i^{\tau}], \; i=1,\dots, N.  \label{eq:demand+future} 
\end{align}
The indicator function $  \mathbbm{1} \{{Y}_i^t >0\}$ ensures that we do not consider absent agents, i.e., those with no current or expected future demands\footnote{With a slight abuse of notation, in the objective function, we assume  $0\times -\infty = 0$ to ensure that an agent without demand is not included, and gets no allocation.}. ${C}_i^t$ denotes the total allocation for agent $i$ over the period $t,\dots,T$, if the future demands would  arrive exactly as their expectations. In other words, ${C}_i^t$ consists of the allocation in the current time step and the reserved allocations for the future. Then, the current allocation $A_i^t$ is a fraction of $C_i^t$ computed as
\begin{align}
    A_i^t=C_i^t\frac{X_i^t }{Y_i^t}.\label{eq:split}
\end{align}
Similar to the hindsight problem \eqref{eq:obj_offline}, we can efficiently solve  \eqref{eq:obj_online} using a variant of the water-filling algorithm given in  Algorithm~\ref{alg:waterfilling past alloc} in Appendix~\ref{app:waterfilling-past alloc}. As illustrated in Fig.~\ref{fig:waterfill-modified}, at  time step $t$, the water-level $\mu_t$ is calculated while accounting for all previous agent allocations and all expected future demands.

\begin{algorithm}
\caption{SAFFE Algorithm}
\label{alg:SAFFE}
\begin{algorithmic}[1]
\STATE{Input: number of agents $N$, resource budget $B$, demand vectors $\mathbf{X}^1,\dots \mathbf{X}^T\in\mathbb{R}^N$, weight vector $\mathbf{w}\in\mathbb{R}^N$, demand distributions  $P_{\mathbf{X}_1},\dots,P_{\mathbf{X}_N}$}
\STATE{Output: allocation vectors $\mathbf{A}^1,\dots,\mathbf{A}^T\in\mathbb{R}^N$}
\FOR{For $t=1,\dots,T$}
\STATE{$Y_i^t= X_i^t+\mathbb{E}[\sum_{\tau=t+1}^{T}X_i^{\tau}]$ for $i=1,\dots,N$}
\STATE{$\widetilde{A}^t_i= \sum_{\tau=1}^{t}A_i^{\tau}$ for $i=1,\dots,N$}
\STATE{$\mathbf{C}^t \leftarrow $ Algorithm~\ref{alg:waterfilling past alloc} with input ($N$, $B$, $\mathbf{Y}^t, \mathbf{w}, \widetilde{\mathbf{A}}^t$) in Appendix~\ref{app:waterfilling-past alloc}}
\STATE{$A_i^t=C_i^t  X_i^t/Y_i^t$ for $i=1,\dots,N$}
\STATE{$B \leftarrow B-\sum_{i=1}^NA_i^t$}
\ENDFOR
 \end{algorithmic}
\end{algorithm}

\vspace{-1em}
\subsection{SAFFE-Discounted Algorithm}\label{sec:saffe-D}
While SAFFE is simple and easy to interpret, it is sub-optimal. In fact, SAFFE maximizes an upper bound of the optimal Q-values of the MDP defined in Sec.~\ref{sec:mdp}. 
This implies that SAFFE overestimates the expected reward gain from the future and reserves the budget excessively for future demands.  To correct the over-reservation, we propose SAFFE-Discounted (SAFFE-D), which penalizes uncertain future requests by their standard deviations. At every step $t$, the algorithm computes
\begin{align}
Y_i^t = \Y_i^t+\sum_{\tau=t+1}^{T}\Big(\mathbb{E}[\Y_i^{\tau}]-\lambda\operatorname{std}(\Y_i^{\tau})\Big)^+
\end{align}
for some regularization parameter $\lambda\geq0$ and solves \eqref{eq:obj_online}. As in SAFFE, the current allocation $A_i^t$ is split proportionally from $C_i^t$ according to  \eqref{eq:split}. The regularizer $\lambda$ is a hyper-parameter that can be tuned to provide the optimal trade-off between consuming too much of the budget at the current step and reserving too much for the future. We show in Appendix \ref{app:SAFFE-D Vs SAFFE-O}, the uncertainty in future demands reduces as we approach $T$, and we expect better performance with a decreasing time-dependent function such as $\lambda(t)=\sqrt{T-t}\,\lambda$ for some $\lambda>0$. Alternatively, $\lambda(t)$ can be learned as a function of time. %\textcolor{blue}{In Sec.~\ref{sec:experiments}, we show how $\lambda(t)$ can be learned  using reinforcement learning.}

\begin{remark}
 In this paper, we assume access to historical data from which expected future demands can be estimated, and we mainly focus on the decision-making aspect for allocations. These estimates are directly used by SAFFE and SAFFE-D. We empirically study how sensitive the algorithms are to estimation errors in Sec.~\ref{sec:exp-sensitivty}.
\end{remark}

%auto-ignore
\section{On the Sub-Optimality of SAFFE-D}\label{sec:online-optimality}
In order to determine how sub-optimal SAFFE-D is, we upper bound the performance gap between SAFFE-D and the hindsight solution, in terms of $\Delta\mathbf{A}^\text{max}$ defined in  \eqref{eq:deltaA}. Based on  \cite[Lemma 2.7]{sinclair2020sequential}, this bound translates to bounds on Pareto-efficiency, envy-freeness, and proportionality of the same order by Lipschitz continuity arguments, which extend to 
SAFFE-D's solution. To this end, we consider two demand settings: a worst-case scenario where  we allow highly unbalanced  demands for different agents, and a balanced-demand scenario where we assume  equal expectation and variance across various agents (see Appendix \ref{app:SAFFE-D Vs SAFFE-O} for details). In the worst-case scenario, we assume that there is one agent who has much larger requests compared to all other agents, and thus, an increase in the request of any other agent would be on her account. This is a fairly general case - the only assumption we make is knowing the mean and variance of future demand distributions. Our arguments rely on concentration inequalities and are in spirit similar  to the idea of \textit{guardrails} in  \cite{sinclair2022sequential}. The following theorem provides the gap between hindsight and SAFFE-D solutions for $\lambda(t)=\sqrt{(T-t)/\xi}$. For brevity we impose the assumption of equal variance $\operatorname{std}(X^{t}_{i})$ across all agents, which can be relaxed for more general settings as in  Remark \ref{remark: relax same std assumption by agents} of Appendix \ref{app:proof of SAFFE-D Vs SAFFE-O}.
%The balanced scenario assumption implies $\operatorname{std}(X^{t}_{i})$ are equal accorss agents. We impose this assumption for brevity, see Remark \ref{remark: relax same std assumption by agents} of Appendix \ref{app:proof of SAFFE-D Vs SAFFE-O} for discussion of more general setting.}

\begin{theorem}[Gap between SAFFE-D and Hindsight]\label{thm:final gap} 
With probability at least $1-\xi$, the gap between SAFFE-D and hindsight measured by $\Delta\mathbf{A}^\text{max}$ introduced in  \eqref{eq:deltaA} satisfies
\begin{align}
 \Delta\mathbf{A}^\text{max} \leq \frac{2 T^{3/2}}{\sqrt{\xi}} \operatorname{std}(X^{t}_{i}),
\end{align}
for balanced demands. In the worst-case scenario this bound is $N\frac{T^{3/2}}{\sqrt{\xi}} \operatorname{std}(X^{t}_{i})$, and scales with the number of agents $N$.
%In the unbalanced demands setting, for the gap between SAFFE-D and hindsight measured by $\Delta\mathbf{A}^\text{max}$ introduced in  \eqref{eq:deltaA} we have
%\begin{align}
% \Delta\mathbf{A}^\text{max} \leq N\frac{T^{3/2}}{\sqrt{\xi}} \operatorname{std}(X^{t}_{i}). 
%\end{align}
%In the balanced demands setting, we have
%\begin{align}
% \Delta\mathbf{A}^\text{max} \leq \frac{2 T^{3/2}}{\sqrt{\xi}} \operatorname{std}(X^{t}_{i}). 
%\end{align}
\end{theorem}
 \begin{proof} The detailed proof is in Appendix \ref{app:detailed proof final thm}. Let $\mathbf{A}^\text{oracle}$ denote the allocations derived using SAFFE-D in a setting where an \textit{oracle} has perfect knowledge of the sequence of incoming demands, i.e., %when SAFFE-D is used 
 with no stochasticity. We refer to this setup as {SAFEE-Oracle}. We upper bound $\Delta\mathbf{A}^\text{max}$, the distance between $\mathbf{A}^\text{SAFFE-D}$ and $\mathbf{A}^\text{hindsight}$, in two steps: by bounding the distance between SAFFE-D and SAFFE-Oracle (Appendix~\ref{app:SAFFE-D Vs SAFFE-O}) and  between SAFFE-Oracle and hindsight (Appendix~\ref{app: discrepancy saffe oracle achieves hindsight}). Finally, triangle inequality completes the proof and upper bounds the distance $\Delta\mathbf{A}^\text{max}$ between SAFFE-D and hindsight in terms of the sum of these two distances. 
\end{proof}

\section{Experimental Results} \label{sec:experiments}

In this section, we use synthetic data to evaluate SAFFE-D  under different settings and compare its performance against baseline sequential algorithms in terms of fairness metrics and budget allocation efficiency. In the experiments, we study: 1) how the fairness of SAFFE-D allocations is affected as the budget size, number of agents and time horizon vary (Sec.~\ref{sec:exp-scaling}), 2) whether the algorithm favors agents depending on their arrival times or demand sizes (Sec.~\ref{sec:exp-NH}), 3) how sensitive the algorithm is to future demand estimation errors  (Sec.~\ref{sec:exp-sensitivty}), 4) how the discounting in SAFFE-D improves fairness (Sec.~\ref{sec:exp-lambda}), and 5) how SAFFE-D compares to allocation policies learned using reinforcement learning (RL) (Sec.~\ref{sec:exp-RL}). Finally, in Sec.~\ref{sec:exp-real}, we evaluate SAFFE-D on real data.

{\setlength{\parindent}{0cm}\paragraph{\textbf{Evaluation Metrics}} We consider the following metrics to compare the fairness and efficiency of sequential allocation algorithms:}
\begin{itemize}[leftmargin=1.5em]
    \item \textbf{Log-NSW}: Expected log-NSW in hindsight as in Eq. \eqref{eq:nsw hindsight}. Since the value of log-NSW is not directly comparable across different settings, we use its normalized distance to the hindsight log-NSW when comparing algorithms,  denoted by $\Delta\text{Log-NSW}$.
    \item \textbf{Utilization (\%)}: The fraction of available budget distributed to the agents over the horizon. Due to the stochasticity in experiments described next, we may have an over-abundance of initial budget that exceeds the total demands over the horizon. Therefore, we define this metric by only considering the required fraction of available budget $B$ as 
    \begin{align}
     \frac{\sum_{i=1}^N\sum_{t=1}^T A_i^t}{\min\{B, \sum_{i=1}^N\sum_{t=1}^T X_i^t\}}\times 100 \notag
    \end{align}
    \item \textbf{$\Delta \mathbf{A}^\text{mean}$ and $\Delta \mathbf{A}^\text{max}$}: The average and maximum \textit{normalized} deviation of per-agent cumulative allocations compared to hindsight allocations as defined in Eq. \eqref{eq:deltaA}. For better scaling in  visualizations, we make a slight change by normalizing these metrics with respect to hindsight as $\frac{\Delta A_{i}}{\sum_{t=1}^T A_i^{t,\text{hindsight}}}$. Since these metrics measure distance, an algorithm with lower $\Delta\mathbf{A}$ is considered more fair in terms of this metric.
    %  \begin{align}
    %     \frac{\Delta A_{i}}{\sum_{t=1}^T A_i^{t,\text{hindsight}}},\notag
    %  \end{align}
\end{itemize}

 {\setlength{\parindent}{0cm}\paragraph{\textbf{Allocation Algorithm Baselines}} 
We compare our algorithms SAFFE and SAFFE-D with the following baselines:} 
\begin{itemize}[leftmargin=1.5em]
    \item \textbf{Hindsight}: As discussed in Sec.~\ref{sec:offline}, the solution to (\ref{eq:obj_offline}) represents a baseline for evaluating sequential algorithms since its Log-NSW provides an upper bound for other  algorithms.
    \item \textbf{HOPE-Online}: Motivated by the mobile food-bank allocation problem, this algorithm was proposed in \cite{sinclair2020sequential} for a setting where $N$ agent demands are sequentially revealed over $T=N$. As we explain in Appendix~\ref{app:SAFFE-hope}, HOPE-Online coincides with SAFFE in the special case where each agent makes only one request.
    \item \textbf{Guarded-HOPE}: It was proposed in \cite{sinclair2022sequential} to achieve the correct trade-off between fairness (in terms of envy-freeness) and efficiency with an input parameter $L_T$ derived based on concentration arguments. The algorithm is designed for a setting of one request per horizon for an individual. We modify Guarded-HOPE (Algorithm \ref{alg:GuardedHOPE} in Appendix~\ref{app:gaurded-hope}) to be applicable to our setting, and  to provide meaningful confidence bounds for the multiple demand case. As in \cite{sinclair2022sequential}, we use $L_T = T^{-1/2}$ and $L_T = T^{-1/3}$. %for Guarded-HOPE as per Algorithm \ref{alg:GuardedHOPE} with $L_T = T^{-1/2}$ and $L_T = T^{-1/3}$ as baselines.
\end{itemize}

 {\setlength{\parindent}{0cm}\paragraph{\textbf{Demand Processes}} We investigate various settings by considering the following demand processes, for which we are able to analytically derive the future demand means and standard deviations:} 
\begin{itemize}[leftmargin=1.5em]
    \item \textbf{Symmetric Setting} (Homogeneous Bernoulli arrivals with Normal demands). At time $t$, agent $i$ makes a request $X_i\sim N(\mu_{i}, \sigma_{i}^2)$ with probability $p$, independently from other agents. The distribution parameters are such that $\mu_i\sim \text{Uniform}(10, 100)$, and  $\sigma_{i}=\mu_{i}/5$.  We study regimes where $p=\frac{c}{T}$, $1\leq c \leq T$, is the average number of arrivals per agent. 
    %At time $t=1,\dots,T$, each agent $i$ (independently) draws a Bernoulli indicator with parameter $p$ to decide if if is making a request. We study regimes where $p=\frac{c}{T}$ where $1\leq c \leq T$ is the mean number of arrivals per agent.  For agent $i$, the requested amount for each $t$ is assumed to be normal with mean $\mu_{i}$ and $\sigma_{i}=\mu_{i}/5$. Mean $\mu_{i}$ is a uniform random from $[10,100]$.
    \item \textbf{Non-symmetric Arrivals Setting} (Inhomogeneous Bernoulli arrivals with Normal demands). In this setup, we assume that the likelihood of arrivals over the horizon  varies across agents, which we use to evaluate whether the algorithms favor agents based on their arrival times. We implement this using Bernoulli arrivals with a time-varying parameter $p$. We consider three groups. \textit{Ask-Early}:  $1/3^\text{rd}$ of the agents are more likely to frequently visit earlier in the horizon such that $p^{t}_{i} \propto (T-t)$, \textit{Ask-Late}: $1/3^\text{rd}$ of the agents are more likely to frequently visit later in the horizon with $p^{t}_{i} \propto t$, and \textit{Uniform}: the remaining agents do not have a preference in terms of time with constant $p^{t}_{i}$. %We refer to the groups as \textcolor{blue}{\textit{Ask-Early}}, \textcolor{blue}{\textit{Ask-Late}} and \textcolor{blue}{\textit{Uniform}} respectively, and 
    The Bernoulli parameters of the three groups are normalized to have equal expected number of visits.
    \item \textbf{Non-symmetric Demands Setting} (Homogeneous Bernoulli arrivals and varying Normal demands). %To further investigate the effect of heterogeneity among agents, 
    We consider homogeneous arrivals but with time-varying demand sizes across agents. We have three groups with agents that make requests with the same probability $p$ over $T$. \textit{More-Early}: $1/3^\text{rd}$ of the agents have Normal demands with mean $\mu^{t}_{i} \propto (T-t)$, \textit{More-Late}: $1/3^\text{rd}$ of the agents have $\mu_{i}^t\propto t$, and \textit{Uniform}: the remaining agents have constant $\mu_{i}^t$. %We refer to the groups as \textcolor{blue}{\textit{More-Early}}, \textcolor{blue}{\textit{More-Late}} and \textcolor{blue}{\textit{Uniform}} respectively, and 
    The parameters are normalized to have the same total demands across the groups. Standard deviation $\sigma_i$ is assumed to be  time-independent for all agents.
\end{itemize}

{\setlength{\parindent}{0cm}\paragraph{\textbf{Real Data}} We evaluate SAFFE-D using a  real dataset\footnote{\url{https://www.kaggle.com/c/demand-forecasting-kernels-only}} containing five years of store-item sales data for $10$ stores. We assume that each store (agent) requests its total daily sales  from a warehouse (supplier) with limited inventory, and the warehouse aims to be fair when supplying the stores on a weekly basis, i.e., when $T=7$. We use the first three years to estimate the expected demands and standard deviations for each weekday, and we use the remaining two years to evaluate SAFFE-D for fair daily allocations.
}

\vspace{0.2cm}
We report the results on synthetic data as an average over $200$ experiment realizations with one standard deviation. For SAFFE-D we use $\lambda(t)=\lambda\sqrt{T-t}$, where the hyper-parameter $\lambda$ is optimized with respect to Log-NSW.  We express the  budget size $B$ as the fraction of total expected demands such that a budget of $0.1$ means enough budget to meet only $10\%$ of the total expected demands. We assume equal weights across all agents.

\vspace{-5pt}
\subsection{Scaling System Parameters}\label{sec:exp-scaling}
In Fig.~\ref{fig:scale-B-main}, we compare SAFEE-D with the baselines as the supplier's budget increases from $0.1$ to $1$. %having enough budget to meet only $10\%$ of the total expected demands to being able to satisfy all the expected demands. 
We consider the Symmetric setting with $N=50$ agents and $2$ expected arrivals over horizon $T=40$. We observe that SAFFE-D outperforms all other approaches in terms of both $\Delta$Log-NSW and resource utilization across all budgets. The advantage of SAFFE-D in comparison to other algorithms is that it
prevents over-reserving for the future, and thus achieves higher utilization and lower $\Delta$Log-NSW. In contrast, SAFFE does not impose the regularization term which potentially results in overestimating future demands. Hope-Guardrail is designed to find a balance between utilization and envy, which does not necessarily mean balance in terms of Log-NSW. Additional experiments showing how the performance of the algorithms scale with the number of agents $N$, time horizon $T$ and the  per-agent expected arrivals are provided in Appendix~\ref{app:exp-scaling}. The results suggest that SAFFE-D, although suboptimal, performs close to  Hindsight.

\begin{figure*}[!h]
\centering
\begin{subfigure}{\linewidth} \centering
 \includegraphics[width=0.93\linewidth]{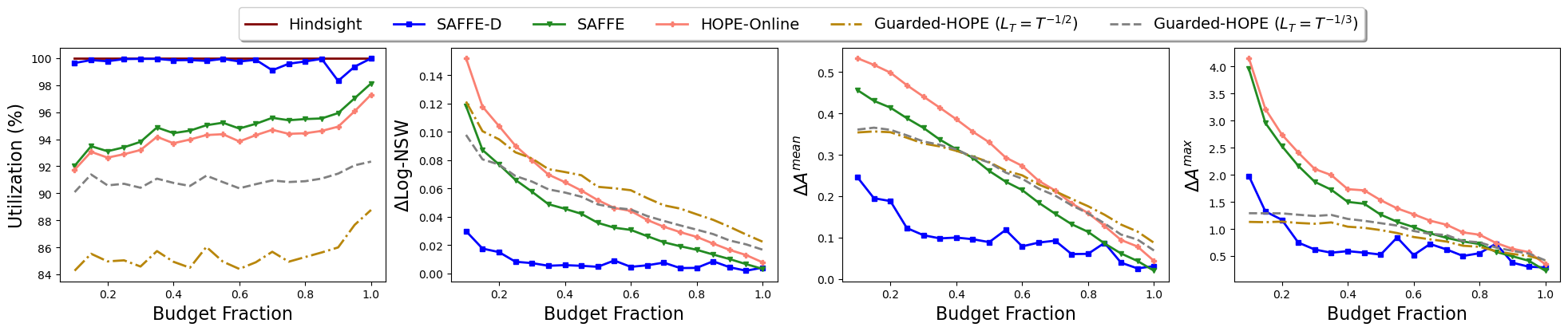}
\caption{Different budget size $B$.}
\label{fig:scale-B-main}
\end{subfigure}%
\\
\begin{subfigure}{\linewidth} \centering
\includegraphics[width=0.91\linewidth]{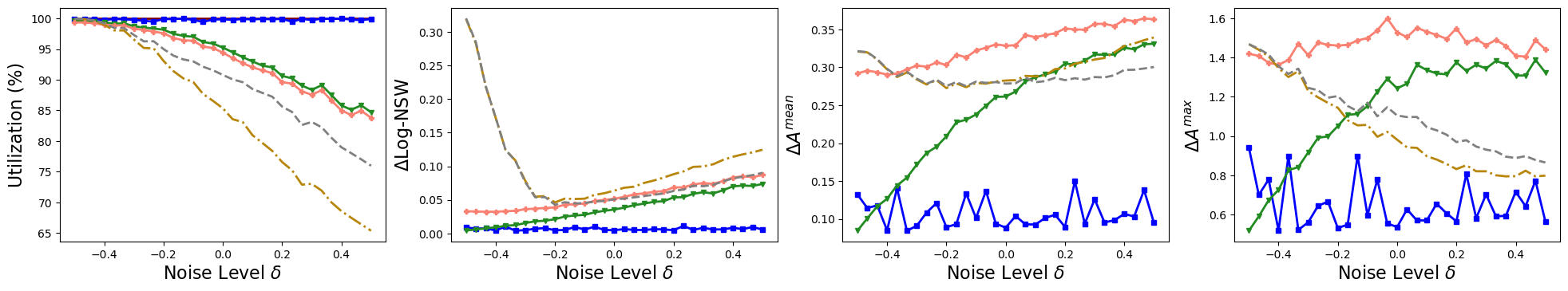}
\caption{Sensitivity to estimation errors.}
\label{fig:sensitivity}
\end{subfigure}%
\vspace{-3mm}
\caption{Symmetric Setting with $N=50$ and $T=40$. (a) SAFFE-D performs close to Hindsight  achieving high utilization and low $\Delta$Log-NSW, $\Delta\mathbf{A}^\text{mean}$ and $\Delta\mathbf{A}^\text{max}$, (b) SAFFE-D is robust to expected demands estimation errors.}
%%\label{fig:NH}
\vspace{-4mm}
\end{figure*}

\vspace{-5pt}
\subsection{Non-Symmetric Arrivals or Demands}\label{sec:exp-NH}
We investigate whether SAFFE-D allocates to agents differently based on their arrival or demand patterns. We consider $N=50$ agents with $2$ expected arrivals over the horizon $T=40$, and Normal demands with $\mu_i=50$, and budget size $0.5$. Under the Non-symmetric Arrivals setting, we compare the agents' allocations for each  algorithm with respect to the Hindsight allocations in Fig.~\ref{fig:NH-arrivals}. In this setting, agents have different likelihoods of arriving over the horizon. We observe that despite being optimized for the log-NSW objective, SAFFE-D outperforms all other algorithms on average, and is more uniform across the three  groups compared to SAFFE and Guarded-HOPE which favor the Ask-Late agents. In terms of the worst-case agent, SAFFE outperforms all other methods and has similar  $\Delta\mathbf{A}^{max}$ across all three groups. The algorithms are compared under the Non-symmetric Demands setting in Appendix~\ref{app:exp-NH}.

\begin{figure}[!h]
\centering
 \includegraphics[width=\linewidth]{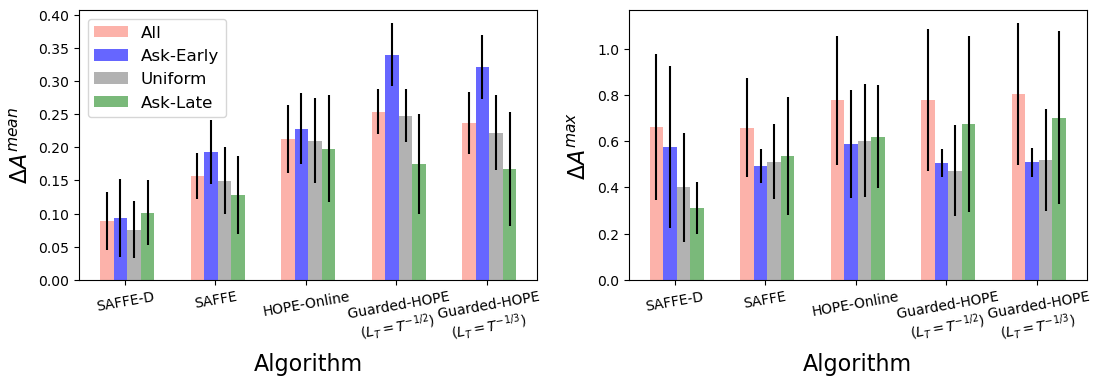}
 \vspace{-6mm}
 \caption{Non-symmetric Arrivals Setting: SAFFE-D outperforms in the average case across groups with different arrival patterns and SAFFE is more uniform in the worst case.}
%SAFFE-D allocates uniformly across agents with different arrival patterns on average, but favors late agents in the worst case. \EK{Or instead just from the main text:  allocates uniformly across agents with different arrival patterns on average, but favors late agents in the worst case
\label{fig:NH-arrivals}
\vspace{-6mm}
\end{figure}

\begin{table}[!htbp]
\centering
\ra{1.2}
\caption{Improvement of SAFFE-D with choice of $\lambda$.}
\vspace{-3mm}
\resizebox{\linewidth}{!}{%
\begin{tabular}{@{}rrccc@{}}\toprule
& Log-NSW ($\uparrow$)  & Utilization \% ($\uparrow$) & $\Delta \mathbf{A}^\text{mean}$ ($\downarrow$) & $\Delta \mathbf{A}^\text{max}$ ($\downarrow$) \\ \midrule
Hindsight & {236.57$\pm$10.21} & {100.0$\pm$0.0}  & $-$ & $-$ \\
SAFFE-D ($\lambda(t)$) & \textbf{235.91$\pm$10.26}   & \textbf{99.45$\pm$1.5} & \textbf{0.05$\pm$0.02} & \textbf{0.45$\pm$0.23} \\ 
SAFFE-D ($\lambda$) & {234.12$\pm$10.67}   & {97.86$\pm$3.5}  & {0.09$\pm$0.03} & {0.65$\pm$0.22}\\
SAFFE ($\lambda=0$) & {232.89$\pm$10.91}   & {95.77$\pm$4.5}  & {0.11$\pm$0.03}  & {0.69$\pm$0.21}\\
\bottomrule
\end{tabular}
}
\label{tb:lambda}
\vspace{-5mm}
\end{table}

\subsection{Sensitivity to Estimation Errors}\label{sec:exp-sensitivty}
%In this paper, we assume that we have historical data from which expected future demands can be estimated, and we mainly focus on the decision-making aspect for allocations. These estimates are directly used by SAFFE and SAFFE-D.
We investigate how sensitive the algorithms are to estimation errors %using noisy estimates for expected future demands 
by scaling each agent's mean estimate at each  step as $\widehat{\mu}_i^t= (1+\delta)\mu_i^t$, where $\delta$ denotes the noise level. Fig.~\ref{fig:sensitivity} shows  the change in performance  as we vary the noise level $\delta$ from $-0.5$ to $0.5$, for the Symmetric setting with $N=50$ agents, $2$ expected per-agent arrivals,   $T=40$, and budget  $0.5$. We observe that  SAFFE-D is robust to mean estimation noise while all other algorithms are impacted. This results from the  discounting term in SAFFE-D, for which the hyper-parameter $\lambda$ can be tuned to the estimation errors. However, when over-estimating the future demands, SAFFE and other methods will be initially more conservative which can lead to lower utilization and less fair allocations due to the leftover budget. When under-estimating the expected demands, they use the budget more greedily earlier on and  deplete the budget sooner, resulting in less allocations to agents arriving later (especially in Gaurded-HOPE).

% \vspace{-8pt}
\subsection{Choice of $\lambda$ for SAFFE-D}\label{sec:exp-lambda}
The discounting term in SAFFE-D uses the confidence in estimates to improve the performance of SAFFE by balancing the budget between allocating now vs reserving for future arrivals. In Table~\ref{tb:lambda}, we compare the performance of SAFFE-D when using different discounting functions, i.e., constant or decreasing $\lambda$ over the horizon. In both cases, the parameter is tuned to maximize  Log-NSW. As expected from Sec.~\ref{sec:saffe-D},  the performance of SAFFE-D improves for $\lambda(t)$ decreasing over time. This is because the uncertainty in expected future demands reduces and the supplier can be less conservative when making allocation decisions.

\begin{table*}[!h]
\centering
\ra{1.15}
\caption{SAFFE-D vs RL allocation policy in the Symmetric setting for $N=10$ agents with different per-agent arrivals.}
\vspace{-2mm}
\resizebox{0.9\textwidth}{!}{%
\begin{tabular}{@{}rrrccccccc@{}}\toprule
& \multicolumn{4}{c}{$2$ Per-agent Arrivals (Sparse)} & \phantom{abc}& \multicolumn{4}{c}{{$4$ Per-agent Arrivals (Dense)}}  \\
\cmidrule{2-5} \cmidrule{7-10}  
& Log-NSW ($\uparrow$)  & Utilization \% ($\uparrow$) & $\Delta \mathbf{A}^\text{mean}$ ($\downarrow$) & $\Delta \mathbf{A}^\text{max}$ ($\downarrow$) && Log-NSW ($\uparrow$)  & Utilization \% ($\uparrow$) & $\Delta \mathbf{A}^\text{mean}$ ($\downarrow$) & $\Delta \mathbf{A}^\text{max}$ ($\downarrow$) \\ \midrule
Hindsight 
& 35.74$\pm$1.23            & 100.0$\pm$0.0 & $-$                    & $-$ 
&& 47.37$\pm$2.20           & 100.0$\pm$0.0         & $-$                 & $-$  \\
RL Policy (SAC)
& \textbf{35.11$\pm$1.20}   & 95.63$\pm$0.49 & \textbf{0.13 $\pm$ 0.01} & \textbf{0.43 $\pm$ 0.01}
&& 47.00$\pm$2.20           & 98.30$\pm$0.45 &  0.12$\pm$0.01           & 0.34$\pm$0.02      \\
SAFFE-D 
& 35.01$\pm$1.28            & \textbf{99.54$\pm$0.42}  & 0.15$\pm$0.05 & 0.51$\pm$ 0.16       
&& \textbf{47.12$\pm$ 2.22} & \textbf{99.82$\pm$0.36}  & \textbf{0.11$\pm$0.04} & \textbf{0.32$\pm$0.09}     \\
SAFFE    
& 34.77$\pm$1.19            & 92.74$\pm$0.41           & 0.16$\pm$0.02     &  0.53$\pm$0.03     
&& 46.87$\pm$2.15           & 97.14$\pm$0.43           & 0.12$\pm$0.02     & 0.35$\pm$0.03      \\
% HOPE-Online  
% & 34.39$\pm$1.16    & 92.11$\pm$0.37  &  0.21$\pm$0.02  & 0.66$\pm$0.03     
% && 46.58$\pm$2.141  & 96.87$\pm$0.47  &  0.19$\pm$0.03  & 0.51$\pm$0.02 \\
% Guarded-HOPE ($L_T = T^{-1/2}$) 
% & 33.87$\pm$1.14    & 84.99$\pm$0.41 & 0.21 $\pm$0.02   & 0.62$\pm$0.01
% && 45.88$\pm$2.12   & 89.89$\pm$0.61 & 0.20$\pm$0.03    & 0.51$\pm$0.01       \\
% Guarded-HOPE ($L_T = T^{-1/3}$) 
% & 34.04$\pm$1.14    & 88.12$\pm$0.55 & 0.21$\pm$0.02   & 0.64$\pm$0.02
% && 46.05$\pm$2.13   & 92.16$\pm$0.74 & 0.21$\pm$0.03   & 0.54$\pm$0.02         \\
\bottomrule
\end{tabular}
}
\label{tb:rl}
\vspace{-1mm}
\end{table*}

\begin{table*}[!htbp]
\centering
\ra{1.15}
\caption{Experiments on real data for daily per-agent demands ($p=1$), and more sparse demand arrivals ($p=0.5$).}
\vspace{-2mm}
\resizebox{0.95\textwidth}{!}{%
\begin{tabular}{@{}rrrccccccc@{}}\toprule
& \multicolumn{4}{c}{$p=1$} & \phantom{abc}& \multicolumn{4}{c}{$p=0.5$}  \\
\cmidrule{2-5} \cmidrule{7-10}  
& Log-NSW ($\uparrow$)  & Utilization \% ($\uparrow$) & $\Delta \mathbf{A}^\text{mean}$ ($\downarrow$) & $\Delta \mathbf{A}^\text{max}$ ($\downarrow$) && Log-NSW ($\uparrow$)  & Utilization \% ($\uparrow$) & $\Delta \mathbf{A}^\text{mean}$ ($\downarrow$) & $\Delta \mathbf{A}^\text{max}$ ($\downarrow$) \\ \midrule
Hindsight 
&{42.72$\pm$0.00} & {100.0$\pm$0.0}  & $-$ & $-$ 
&& {35.24$\pm$0.94} & {100.0$\pm$0.0}  & $-$ & $-$  \\
SAFFE-D 
&\textbf{42.72$\pm$0.00} & \textbf{100.0$\pm$0.0}  & \textbf{0} &  \textbf{0} 
&&  \textbf{35.16$\pm$0.94} & \textbf{100.0$\pm$0.0}  & \textbf{0.06$\pm$0.04} &  \textbf{0.17$\pm$0.14} \\
SAFFE  
&{42.72$\pm$0.02} & {99.98$\pm$0.15}   & {0.01$\pm$0.01}   & {0.01$\pm$0.03} 
&& {34.82$\pm$1.10} & {99.16$\pm$2.94}   & {0.17$\pm$0.08}  & {0.37$\pm$0.16}   \\
HOPE-Online  
&{42.71$\pm$0.02} & {99.98$\pm$0.15}   & {0.10$\pm$0.04}   & {0.04$\pm$0.01}
&& {34.44$\pm$1.10} & {99.10$\pm$3.03}    & {0.27$\pm$0.06}   & {0.60$\pm$0.15}\\
Guarded-HOPE ($L_T = T^{-1/2}$) 
&{42.48$\pm$0.30} & {97.93$\pm$2.80}   & {0.07$\pm$0.02}   & {0.16$\pm$0.05}
&& {33.55$\pm$3.67} & {95.93$\pm$6.24}   & {0.27$\pm$0.06}   & {0.62$\pm$0.15}\\
Guarded-HOPE ($L_T = T^{-1/3}$) 
&{42.55$\pm$0.21} & { 98.64$\pm$1.94}   & {0.07$\pm$0.02}   & {0.15$\pm$0.05}
 && {33.59$\pm$3.67} & { 96.54$\pm$5.61}   & {0.28$\pm$0.07}   & {0.63$\pm$0.15}\\
\bottomrule
\end{tabular}
}
\label{tb:realdata}
\vspace{-3mm}
\end{table*}

\subsection{Reinforcement Learning Results}\label{sec:exp-RL}
% \textcolor{red}{Yuchen to add explanation here and more details in Appendix}
We investigate 
% the performance of a RL supplier agent 
learning an allocation strategy using RL
under the Symmetric setting. We train the RL allocation policy without access to any demand distribution information (e.g., expectation). We consider $N=10$ agents, $T=10$ and varying budget size sampled from  $\text{Uniform}(0.4,0.8)$. We use the Soft Actor-Critic (SAC)~\cite{HaarnojaZAL18,SpinningUp2018} method with automatic entropy tuning~\cite{HaarnojaZAL18b} to learn a policy (See  Appendix~\ref{app:rldetails}). The  results averaged over $200$ experiment rollouts for five random seeds are reported in Table~\ref{tb:rl}, for two cases where we  expect to have $2$ and $4$ per-agent arrivals. We observe that while the RL policy is not able to match the hindsight performance, it  outperforms SAFFE-D and other approaches in terms of Log-NSW under sparse ($2$ per-agent) arrivals. With denser ($4$ per-agent) arrivals, the RL policy  performs  slightly worse than SAFFE-D while still outperforming others. These observations are in line with other experiments illustrating the close-to-optimal performance of SAFFE-D especially in  settings with more arrivals.

%\textcolor{red}{not entirely sure about the reason! Yuchen's intuition is that: It would be much more difficult for the RL supplier agent to correctly reason about the contribution of the allocation for each agent to the reward signal at each time step during training}

% We use a modern version of Soft Actor-Critic (SAC)~\cite{HaarnojaZAL18,SpinningUp2018} method with automatic entropy tuning~\cite{HaarnojaZAL18b} to learn a policy.
% We compare the learned policy with other methods by measuring the mean evaluation of 200 experiment rollouts and finally show the total average performance over five random seeds with one standard deviation in Table~\ref{tb:syntheticdata}. More training details are deferred to Appendix~\ref{app:rldetails}.
% From the results in Table~\ref{tb:rl}, we observe that SAC cannot achieve the optimal as the hindsight, but it outperforms other baselines in the scenario when agents' demands are relatively sparse ($p=1.0$). When agents' demands become denser ($p=4.0$), the entire state and action spaces get even larger. It would be much more difficult for the RL supplier agent to correctly reason about the contribution of the allocation for each agent to the reward signal at each time step during training. As a result, SAC's performance becomes slightly worse than SAFFE-D while still outperforming the rest of baselines.

\vspace{-5pt}
\subsection{Real Data}\label{sec:exp-real}
In contrast to the demand processes considered in previous experiments, in this case, we need to estimate the expected demands and their standard deviations. The efficiency and fairness metrics are reported in Table~\ref{tb:realdata} for budget size $0.5$. We observe that SAFFE-D is optimal, and SAFFE and HOPE-Online perform very close to Hindsight in terms of Utilization and Log-NSW. As shown in Fig.~\ref{fig:scale-pfactor} in Appendix~\ref{app:exp-scaling}, we expect that SAFFE-D achieves (close to) hindsight performance for dense agent arrivals as with the daily arrivals in this setting. In order to investigate less dense scenarios, we impose sparsity in the arrivals by erasing each arrival with probability $p$. As observed in Table~\ref{tb:realdata}, SAFFE-D is no  longer optimal but outperforms all other methods. Additional results are provided in Appendix~\ref{app:exp-real}.

%auto-ignore
\section{Related Work} \label{sec:related_work}
Fair division is a classic problem extensively studied for settings involving single and multiple resources, divisible and indivisible resources, and under different fairness objectives \cite{bretthauer1995nonlinear,katoh1998resource,lee2005hybrid,patriksson2008survey}. In the online setting, which requires real-time  decision-making, \cite{walsh2011online,kash2014no} consider the \textit{cake cutting} problem where agents interested in fixed divisible resources arrive and depart over time, while the goal in \cite{aleksandrov2015online,zeng2020fairness,he2019achieving} is to allocate sequentially arriving indivisible resources to a fixed set of agents. A wide range of fairness metrics are used in the literature, including global fairness (e.g., utilitarian or egalitarian welfare) \cite{kalinowski2013social,manshadi2021fair}, individual  measures \cite{benade2018make,gupta2021individual,sinclair2022sequential} or probabilistic notions of fairness across groups of agents \cite{donahue2020fairness}. %\textcolor{blue}{Another class of problems consider strategic agents that misreport their true demands in hope of receiving more favorable allocations. Several strategy-proof  mechanisms have been proposed in \cite{guo2009competitive,cole2013mechanism,tan2020mechanism} with the goal of incentivizing truthfulness.}

%survey on online fair division \cite{aleksandrov2020online}

The most relevant work to ours are \cite{lien2014sequential,sinclair2020sequential,sinclair2022sequential}, which study the online allocation of divisible resources to agents with stochastic demands. \cite{lien2014sequential} proposes equitable  policies that maximize the minimum agent utility, while \cite{sinclair2020sequential,sinclair2022sequential} use approximate individual fairness notions. % with regards to envy and efficiency. 
Motivated by the food-bank allocation problem, \cite{sinclair2020sequential,sinclair2022sequential} assume one request per agent, where the requests are restricted to a finite set of values. The algorithm proposed in \cite{sinclair2022sequential} improves upon \cite{sinclair2020sequential} and achieves a trade-off between a measure of fairness (envy-freeness) and resource leftover using  concentration arguments on the optimal NSW
solution. %learning a \textit{lower guardrail} on the optimal solution in hindsight. 
In our work, we study a more general setup where agents can arrive simultaneously and several times over the horizon and we do not restrict agent demand sizes. Our goal is to be fair to the agents considering all their allocations over the horizon under the NSW objective in hindsight.

 \section{Conclusions}\label{sec:conclusions}
This paper studies the problem of fair resource allocation in the sequential setting, and mathematically formulates the NSW fairness objective under the MDP framework. 
With the goal to maximize the return in the MDP, we propose SAFFE-D, a new  algorithm that enjoys a worst-case theoretical sub-optimality bound. 
Through extensive experiments, we show that SAFFE-D significantly outperforms existing approaches, and effectively balances the fraction of budget that should be used at each step versus preserved for the future. Empirical results under various demand processes demonstrate the superiority of SAFFE-D for different budget sizes, number of agents, and time horizons, especially in settings with dense arrivals. The uncertainty-based discount in SAFFE-D also improves the robustness of  allocations to errors in future demand estimations. Future work includes exploring learning-based allocation policies, learning the optimal $\lambda(t)$ in SAFFE-D, and extensions to settings with strategic agents.

% % \subsection{Multiple-Resource Setting}\label{sec:multi-resource}
% \paragraph{\textbf{Multiple-Resource Setting}}%\label{sec:multi-resource}
% The setting in this paper can be easily extended to the supplier having $M$ divisible resources, where resource $j\in\{1,\dots,M\}$ has a limited budget of size $B_j$, and $\mathbf{B}=(B_1,\dots,B_M)$. At time $t\in \{1,\dots,T\}$, agent $i$'s demands i denoted by $\Y^t_{i}=(\Y^t_{i,1}, \dots, \Y^t_{i,M})\in \mathbb{R}^M$, and the supplier's allocation is $\A^t_{i}=(\A^t_{i,1}, \dots, \A^t_{i,M})\in \mathbb{R}^M$. For multiple resources,  $\Theta_i=(\Y_{i,j}^1,\dots,\Y_{i,j}^T, V_i)$, where $V_i=(v_{i,1},\dots,v_{i,M})$ denotes the per-unit value that agent $i$ associates to each resource, 
% \begin{align}
%     u(\mathbf{\A}_i,\mathbf{\Theta}_i)= \sum_{t=1}^T\sum_{j=1}^M v_{i,j}\min\left\{ \A_{i,j}^t, \,  \Y_{i,j}^t\right\}\label{eq:utility-multi}
% \end{align}
% While we present the paper in terms of a single resource, all algorithms and proofs can be extend to the multiple-resource setting. In this case, the optimal allocation in hindsight or the sequential setting will not have closed-form solutions as in the water-filling algorithm, but can be computed using convex optimization techniques.

\paragraph{Disclaimer}
This paper was prepared for informational purposes in part by
the Artificial Intelligence Research group of JPMorgan Chase \& Co\. and its affiliates (``JP Morgan''),
and is not a product of the Research Department of JP Morgan.
JP Morgan makes no representation and warranty whatsoever and disclaims all liability,
for the completeness, accuracy or reliability of the information contained herein.
This document is not intended as investment research or investment advice, or a recommendation,
offer or solicitation for the purchase or sale of any security, financial instrument, financial product or service,
or to be used in any way for evaluating the merits of participating in any transaction,
and shall not constitute a solicitation under any jurisdiction or to any person,
if such solicitation under such jurisdiction or to such person would be unlawful.

%\clearpage
\bibliographystyle{named} 
\bibliography{references}

\newpage
%auto-ignore
\clearpage
\appendix

\twocolumn[
  \begin{@twocolumnfalse}
    \vbox{%
    \hsize\textwidth
    \linewidth\hsize
    \hrule height 4pt
    \vskip 0.25in
    \centering
    {
    \huge\bf
    {Sequential Fair Resource Allocation under a Markov Decision Process Framework \\ 
    Supplementary Materials} \par}
    \vskip 0.2in
    \hrule height 1pt
    \vskip 0.2in
    }
  \end{@twocolumnfalse}
]

\section{Multiple-Resource Setting}\label{app:multi-resource}
% \paragraph{\textbf{Multiple-Resource Setting}}%\label{sec:multi-resource}
The setting in this paper can be easily extended to the supplier having $M$ divisible resources, where resource $j\in\{1,\dots,M\}$ has a limited budget of size $B_j$, and $\mathbf{B}=(B_1,\dots,B_M)$. At time $t\in \{1,\dots,T\}$, agent $i$'s demands i denoted by $\Y^t_{i}=(\Y^t_{i,1}, \dots, \Y^t_{i,M})\in \mathbb{R}^M$, and the supplier's allocation is $\A^t_{i}=(\A^t_{i,1}, \dots, \A^t_{i,M})\in \mathbb{R}^M$. For multiple resources,  $\Theta_i=(\Y_{i,j}^1,\dots,\Y_{i,j}^T, V_i)$, where $V_i=(v_{i,1},\dots,v_{i,M})$ denotes the per-unit value that agent $i$ associates to each resource, 
\begin{align}
    u(\mathbf{\A}_i,\mathbf{\Theta}_i)= \sum_{t=1}^T\sum_{j=1}^M v_{i,j}\min\left\{ \A_{i,j}^t, \,  \Y_{i,j}^t\right\}\label{eq:utility-multi}
\end{align}
While we present the paper in terms of a single resource, all algorithms and proofs can be extend to the multiple-resource setting. In this case, the optimal allocation in hindsight or the sequential setting will not have closed-form solutions as in the water-filling algorithm, but can be computed using convex optimization techniques.

\section{Existence of Optimal Policy} \label{app:existance}
% \label{sec:existence_optimal_policy}
% \begin{theorem}
%  An optimal policy $\pi^\star$ exists for the finite-horizon MDP defined in Sec.~\ref{sec:mdp} with continuous state space, continuous state-dependent action space, and  potentially unbounded and non-continuous reward function.
% \end{theorem}
% \begin{proof}
% See Appendix A in the supplementary material. \textcolor{blue}{Sihan to add a small sketch}
% \end{proof}
 
The state space of the MDP is continuous, the action space is continuous and state-dependent, and the reward is bounded and continuous due to the constant $\epsilon$ in Eq.~\eqref{eq:def_U}. As a result of \cite{furukawa1972markovian}[Theorem 4.2], a stationary optimal policy $\pi^{\star}=(\pi_1^{\star},\dots,\pi_T^{\star})$ is known to exist.

If we restrict the action space to allocations that satisfy $A_i^t\leq X_i^t$ for all $i$ and $t$, the  reward at step $t$ \eqref{eq:step-reward} becomes
\begin{align}
     &R_t(s_t, \mathbf{A}^t) =\notag \\
     &\quad \sum_{i=1}^N \mathbbm{1} \{X_i^t >0\} \,.\, w_i\Big(\log(\widetilde{\A}_i^{t-1}+\A_i^t+\epsilon) -\log(\widetilde{\A}_i^{t-1}+\epsilon)\Big),\label{eq:reward-detailed}
\end{align}
where $\widetilde{A}_i^t \coloneqq \sum_{\tau=1}^t A_i^\tau$ denotes the cumulative allocated resources to agent $i$ till time $t$. At time step $T$, $\pi_T^\star$  maximizes $Q_T(s_T,\mathbf{A}^T)=R_T(s_T, \mathbf{A}^T)$ which does not depend on future allocations. The optimal allocation ${\mathbf{A}^T}^\star$ can be directly computed from $Q_T$ since there is no uncertainty about future demands. Then, the optimal policy for all $t$ can be derived by recursively solving \eqref{eq:optalg_timet} backward in time. This quickly becomes computationally intractable, which motivates us to find alternative solutions that trade-off sub-optimality for computational efficiency. %In Sec.~\ref{sec:heuristic}, we introduce a heuristic algorithm that uses estimates of future demands to determine the allocations, and we discuss a learning-based approach in Sec.~\ref{sec:exp-RL}.

\section{Water-Filling Algorithm with Past Allocations}\label{app:waterfilling-past alloc} 
Algorithm~\ref{alg:waterfilling past alloc} extends the water-filling algorithm presented in Algorithm~\ref{alg:waterfilling-base} for agents with different weights to a setting where each agent may have past allocations denoted by $\mathbf{A}^0$. This algorithm is used as a base algorithm in SAFFE (Algorithm~\ref{alg:SAFFE}) to compute the allocations during each time step based on past allocations, and current  and expected future demands. In line 3, the agents are ordered according to their demands, past allocations and weights. This ordering determines which agent is used to compute the water-level $\mu$. For each selected agent $i_j$, the condition in line 6 determines whether there is enough budget to fully satisfy agent $i_j$'s demand $X_{i_j}$. If there is enough budget, agent $i_j$ receives its full request (line 10) and the supplier moves on to the next agent in order. Otherwise, the available budget is fully divided among the remaining agents (line 8) according to a water-level $\mu$ computed  in line 7. The water-level accounts for the agent's past allocations as well as  their weights.  
\begin{algorithm}
\caption{Water-Filling Algorithm with Past Allocations}
\label{alg:waterfilling past alloc}
\begin{algorithmic}[1]
\STATE{Input: number of agents $N$, resource budget $B$, demand vector ${\mathbf{X}}\in\mathbb{R}^N$, weight vector $\mathbf{w}\in\mathbb{R}^N$, past allocations $\mathbf{A}^{0}\in\mathbb{R}^N$}
\STATE{Output: allocation vector ${\mathbf{A}}\in\mathbb{R}^N$}
\STATE{Find an ordered index set $\{i_1,\dots,i_N\}$ by sorting the agents such that $\frac{{X}_{i_1}+{A}^0_{i_1}}{w_{i_1}}\leq\dots\leq \frac{{X}_{i_N}+{A}^0_{i_N}}{w_{i_N}}$}
\STATE{$j \leftarrow 1$}
\WHILE{$j\leq N$ and $B > 0$}
\IF{$B \leq \sum_{k=j}^{N} \Big(\frac{w_{i_k}}{w_{i_j}}({X}_{i_j}+A_{i_j}^0)-{A}^0_{i_k}\Big)^+$}
\STATE{Solve $\sum_{k=j}^{N}(\frac{w_{i_k}}{w_{i_j}}\mu-A_{i_j}^0)^+=B$ for $\mu$}
\STATE{$A_{i_k}=(\frac{w_{i_k}}{w_{i_j}}\mu-A_{i_j}^0)^+$ for  $k=j,\dots,N$}
%     \STATE{Create index set $\mathcal{D}=[i_j,\dots, i_N]$}
%     \STATE{\begin{align}\widetilde{A}_{k}=\frac{w_{k}(B+{A}^0_{k})}{\sum_{k'=j}^{N}w_{i_{k'}}}-{A}^0_{k},\quad\forall k\in\mathcal{D}\label{alg:waterfilling:eq1}
%     \end{align}}
%     \WHILE{$\widetilde{A}_{k'}\leq0$ for any $k'=i_j,\dots, i_N$}
%     \STATE{Set $\widetilde{A}_{k'}=0$ and remove $k'$ from $\mathcal{D}$}
%     \STATE{Re-run \eqref{alg:waterfilling:eq1}}
%     \ENDWHILE
 \STATE{$\textbf{break}$}
\ELSE
  \STATE{${A}_{i_j}={X}_{i_j}$}  %\hfill \COMMENT{Allocate client $i_j$'s full demand}
  \STATE{$B \leftarrow B - {A}_{i_j}$}
  \STATE{$j\leftarrow j+1$} 
\ENDIF
\ENDWHILE
\end{algorithmic}
\end{algorithm}

\section{SAFFE-Discounted vs SAFFE-Oracle}\label{app:SAFFE-D Vs SAFFE-O}
Under mild assumptions on the distribution of demands, we can quantify the distance between SAFFE-D and SAFFE-Oracle allocations using concentration inequalities on the deviation of future demands from their expected value. Let us define 
\begin{align}
    &\overline{Y}_i^t = X_i^t + \mathbb{E}\Big[\sum_{\tau=t+1}^{T}X_i^{\tau}\Big]+\frac{\sqrt{T-t}}{\sqrt{\xi}}\operatorname{std}(X^{\tau}_{i})\\
    &\underbar{Y}_i^t = X_i^t + \mathbb{E}\Big[\sum_{\tau=t+1}^{T}X_i^{\tau}\Big]-\frac{\sqrt{T-t}}{\sqrt{\xi}}\operatorname{std}(X^{\tau}_{i})
\end{align}
For simplicity assume that agent $i$'s demands are i.i.d (this assumption can be relaxed see Remark \ref{remark: relax same std assumption by agents}). Then, for $\xi>0$, with probability at least %$1-(T-t)\cdot\xi,$
$1-\xi$, based on
Chebyshev’s inequality we have%  (Appendix~\ref{appendix discrepancy saffe and hindsight})
\begin{align}
    \underbar{Y}_i^t \leq X^{t}_{i}+\sum_{\tau=t+1}^{T} X^{\tau}_i \leq \overline{Y}_i^t\label{eq_requests_upper_lower}
\end{align}

We further assume that all agents have equal $\operatorname{std}(X^{\tau}_{i})$, but their expectations might differ (the assumption simplifies the presentation, and is not crucial). We first present the worst-case scenario bound, where we allow highly unbalanced future demands for different agents, i.e. there is an agent $k$ such that $\underbar{Y}^{t}_{k} \geq \overline{Y}^{t}_{j}$ for all other agents $j\neq k$. 

\begin{theorem}[Unbalanced demands bound]\label{thm:bound oracle vs saffe}
Let $A_{i}^{t, \text{SAFFE-D}}$ and $A_{i}^{t,\text{oracle}}$ denote allocations by SAFFE-D for $\lambda(t) = \sqrt{\frac{T-t}{{\xi}}}$ and SAFFE-Oracle, respectively. Then, for all agents $i$ we have
\begin{align}\notag
    \left\vert A_{i}^{t,\text{SAFFE-D}} - A_{i}^{t, \text{oracle}} \right\vert \leq 
\begin{cases}
2N\sqrt{\frac{(T-t)}{{\xi}}} \operatorname{std}(X^{t}_{i})  & \text{if }   B^{t}\leq   \sum\limits_{i=1}^{N}\overline{Y}_i^t,\\
4\sqrt{\frac{(T-t)}{{\xi}}} \operatorname{std}(X^{t}_{i})  & \text{if }     B^{t}\geq   \sum\limits_{i=1}^{N}\overline{Y}_i^t
\end{cases}
\end{align}
with probability at least $1-\xi$.
\end{theorem}

\begin{proof}
The detailed proof is in Appendix \ref{app:proof of SAFFE-D Vs SAFFE-O}. Intuitively, the discrepancy scales with the number of agents, since if all agnets $j\neq k$ submit demands according to their upper bound $\overline{Y}^{t}_{j}$, then the water-level moves so that their SAFFE-Oracle allocations increase compared to SAFFE-Oracle with $\underbar{Y}^{t}_{j}$, which happens on the account of agent $j$ who now gets less. Finally, using \eqref{eq_requests_upper_lower} completes the proof, as we can translate the discrepancy between SAFFE-Oracle with $\underbar{Y}^{t}_{i}$ and $\overline{Y}^{t}_{i}$ to that between SAFFE-D and SAFFE-Oracle with $Y^{t}_{i}$.
\end{proof}

\begin{remark} \label{remark: relax same std assumption by agents}
The equal standard deviation assumption that for any two agents $i$ and $j$, we have $\operatorname{std}(X^{\tau}_{i}) = \operatorname{std}(X^{\tau}_{j})$ is used in the proofs only to simplify $\sum_{j\neq i} \operatorname{std}(X^{\tau}_{j})=(N-1)\operatorname{std}(X^{\tau}_{j})$. Thus, the assumption can be removed.
\end{remark}

In order to move beyond the worst-case scenario of highly unbalanced request distributions, we now assume that all agents have the same $\overline{Y}_i^t$ and $\underbar{Y}_i^t$, e.g. if their demands are from the same distribution, the discrepancy between allocations scales better.
\begin{theorem}[Balanced demands bound]\label{thm:special bound oracle vs saffe}
If for any two agents $i$ and $j$ we have the same bounds in (\ref{eq_requests_upper_lower}), i.e. $\overline{Y}_i^t = \overline{Y}_j^t$ and $\underbar{Y}_i^t = \underbar{Y}_j^t$, then for all agents $i$,  with probability at least $1-\xi$, we have
\begin{align}\notag
    \left\vert A_{i}^{t,\text{SAFFE-D}} - A_{i}^{t, \text{oracle}} \right\vert \leq 
\begin{cases}
0 & \text{if } B^{t}\leq   \sum\limits_{i=1}^{N}\underbar{Y}_i^t,\\
4\sqrt{\frac{(T-t)}{{\xi}}} \operatorname{std}(X^{t}_{i})  & \text{if }   B^{t} \geq \sum\limits_{i=1}^{N}\underbar{Y}_i^t 
\end{cases}
\end{align}
\end{theorem}

\begin{proof}
See Appendix \ref{app:proof of SAFFE-D Vs SAFFE-O} for the detailed proof, which follows a similar argument as the proof of Theorem \ref{thm:bound oracle vs saffe}. 
\end{proof}

\section{Detailed Proof of Theorem~\ref{thm:bound oracle vs saffe}} \label{app:proof of SAFFE-D Vs SAFFE-O}
% \section{SAFFE-Discounted vs SAFFE-Oracle} \label{appendix discrepancy saffe and hindsight}
\begin{lemma}[Chebyshev's inequality]
For a random variable $Y$ with finite expectation and a finite non-zero variance, we have
\begin{align*}
    \mathbb{P}\left(\vert Y-\mathbb{E}(Y)\vert \geq \frac{1}{\sqrt{\xi}}\sqrt{\text{Var}(Y)}\right) \leq \xi
\end{align*}
\end{lemma}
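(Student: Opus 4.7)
The plan is to derive Chebyshev's inequality as an immediate corollary of Markov's inequality applied to the non-negative random variable $(Y - \mathbb{E}(Y))^2$. Recall that Markov's inequality states that for any non-negative random variable $Z$ with finite expectation and any $a > 0$, we have $\mathbb{P}(Z \geq a) \leq \mathbb{E}[Z]/a$; this in turn follows from the fact that $Z \geq a$ on the event $\{Z \geq a\}$, so integrating against the indicator of that event gives $a\,\mathbb{P}(Z \geq a) \leq \mathbb{E}[Z]$.

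To apply this in the present setting, first I would take $Z = (Y - \mathbb{E}(Y))^2$, which is non-negative and has finite expectation equal to $\text{Var}(Y)$ by hypothesis. Then, choosing the threshold $a = \text{Var}(Y)/\xi$ (which is strictly positive, since the variance is non-zero and $\xi > 0$), Markov's inequality yields
\begin{align*}
    \mathbb{P}\!\left((Y - \mathbb{E}(Y))^2 \geq \frac{\text{Var}(Y)}{\xi}\right) \leq \frac{\mathbb{E}\!\left[(Y-\mathbb{E}(Y))^2\right]}{\text{Var}(Y)/\xi} = \xi.
\end{align*}
Finally, since $t \mapsto t^2$ is monotone on $[0,\infty)$, the event on the left coincides with $\{|Y - \mathbb{E}(Y)| \geq \sqrt{\text{Var}(Y)}/\sqrt{\xi}\}$, which is exactly the event appearing in the statement of the lemma, completing the proof.

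There is essentially no serious obstacle here: Chebyshev's inequality is a classical textbook result and the argument is a one-line corollary of Markov's inequality. The only minor bookkeeping point is verifying that the hypotheses of finite expectation and finite non-zero variance ensure both that $\mathbb{E}[(Y-\mathbb{E}(Y))^2]$ is well-defined and equal to $\text{Var}(Y)$, and that the threshold $\sqrt{\text{Var}(Y)/\xi}$ is strictly positive and finite, so that the event is well-defined and the application of Markov's inequality is unambiguous.
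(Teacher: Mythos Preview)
Your proof is correct and is the standard textbook derivation of Chebyshev's inequality via Markov's inequality. The paper does not actually prove this lemma: it is stated as a classical result and used directly, so there is no ``paper's own proof'' to compare against beyond noting that your argument is exactly the expected one.
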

In particular, under the assumption from Appendix \ref{app:SAFFE-D Vs SAFFE-O} that agent $i$'s requests are i.i.d, Chebyshev's inequality gives us that Eq.~\eqref{eq_requests_upper_lower} holds with probability $1-\xi$. Note however that the assumption is not crucial, and in a more general case we would have 
$$\underbar{Y}^{t}_{i} = \mathbb{E}[\sum_{\tau=t+1}^{T}X^{\tau}_{i}]-\frac{1}{\sqrt{\xi}}\sqrt{\text{Var}\Big(\sum_{\tau=t+1}^{T}X^{\tau}_{i}\Big)}$$
and
$$\overline{Y}^{t}_{i}=\mathbb{E}[\sum_{\tau=t+1}^{T}X^{\tau}_{i}]+\frac{1}{\sqrt{\xi}}\sqrt{\text{Var}\Big(\sum_{\tau=t+1}^{T}X^{\tau}_{i}\Big)}.$$
See also Remark \ref{remark: relax same std assumption by agents} on how the assumption on equal standard deviations $\operatorname{std}(X^{\tau}_{i}) = \operatorname{std}(X^{\tau}_{j})$ for any two agents $i$ and $j$ is not crucial for the analysis.

We proceed as follows. First, we state the result for SAFFE-Oracle in the unbalanced demands regime, i.e. when there is an agent $k$ such that $\underbar{Y}^{t}_{k}\geq\overline{Y}^{t}_{j}$ for all $j\neq k$. Then, we state the result for the case of balanced demands, i.e. when $\underbar{Y}^{t}_{i}=\underbar{Y}^{t}_{j}$ and $\overline{Y}^{t}_{i}=\overline{Y}^{t}_{j}$ for any two $i$ and $j$. Finally, we present the  proof for both results.

\begin{lemma}[Unbalanced demands SAFFE-Oracle]\label{lem bound alloc confidence interval}
Let $\overline{A}_i^t$ and $\underbar{A}_i^t$ denote the allocations of SAFFE-Oracle with $Y_{i}^{t} = \overline{Y}_i^t$ and $Y_{i}^{t} = \underbar{Y}_i^t$, respectively. Then for all agents $i$ we have
\begin{align}
|\overline{A}_i^t - \underbar{A}_i^t | \leq 
\begin{cases}
2N\sqrt{\frac{ (T-t)}{{\xi}}} \operatorname{std}(X^{\tau}_{i}) & \text{ if } B^{t}\leq   \sum\limits_{i=1}^{N}\underbar{Y}_i^t,\\
2N\sqrt{\frac{ (T-t)}{{\xi}}} \operatorname{std}(X^{\tau}_{i})  & \text{ if }   \sum\limits_{i=1}^{N}\underbar{Y}_i^t \leq B^{t}\leq   \sum\limits_{i=1}^{N}\overline{Y}_i^t,\\
4\sqrt{\frac{ (T-t)}{{\xi}}} \operatorname{std}(X^{\tau}_{i})  & \text{ if }     B^{t}\geq   \sum\limits_{i=1}^{N}\overline{Y}_i^t
\end{cases}
\end{align}
\end{lemma}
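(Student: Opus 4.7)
The plan is to reduce the lemma to a deterministic sensitivity analysis of the water-filling subroutine (Algorithm~\ref{alg:waterfilling past alloc}) that SAFFE-Oracle calls at time $t$. Since the past allocations $\widetilde{A}_i^t$ and residual budget $B^t$ entering this step are identical for the two runs, it suffices to bound how the output changes when each demand cap is perturbed by at most $\delta := 2\sqrt{(T-t)/\xi}\,\operatorname{std}(X_i^\tau)$. The i.i.d.\ Chebyshev bound in~\eqref{eq_requests_upper_lower} guarantees $\underbar{Y}_i^t \le X_i^t + \sum_{\tau>t}X_i^\tau \le \overline{Y}_i^t$ with probability at least $1-\xi$, which is the event we condition on throughout.

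Next I would exploit the unbalanced structure: by hypothesis there is an agent $k$ with $\underbar{Y}_k^t \ge \overline{Y}_j^t$ for every $j \ne k$, so in the sorted ordering of line~3 of Algorithm~\ref{alg:waterfilling past alloc} agent $k$ is last in both runs and the computed water-level never exceeds $\underbar{Y}_k^t$. Consequently agent $k$ absorbs whatever residual budget remains after the other agents are (possibly partially) served, while each non-dominant agent $j$ receives $\min(Y_j^t,\mu)$ for the run-specific water-level $\mu$. The case analysis on $B^t$ is then short. When $B^t \ge \sum_i \overline{Y}_i^t$ every agent is saturated at its own cap in both runs, so $|\overline{A}_i^t - \underbar{A}_i^t| \le \delta$, which sits inside the stated $4\sqrt{(T-t)/\xi}\,\operatorname{std}$ bound of case~3. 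In the complementary regime $B^t \le \sum_i \overline{Y}_i^t$, covering cases~1 and~2, each non-dominant agent still moves by at most $\delta$ because its cap moved by at most $\delta$, and the telescoping identity $\overline{A}_k^t - \underbar{A}_k^t = -\sum_{j \ne k}(\overline{A}_j^t - \underbar{A}_j^t)$, forced by the fact that both runs exhaust the same residual budget whenever the water-level sits strictly below $\underbar{Y}_k^t$, yields $|\overline{A}_k^t - \underbar{A}_k^t| \le (N-1)\delta \le 2N\sqrt{(T-t)/\xi}\,\operatorname{std}$, as required.

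The main obstacle will be the edge behaviour near a regime boundary, where the set of saturated agents may differ between the two runs and the telescoping identity is only approximate. I would resolve this by invoking the standard $1$-Lipschitz sensitivity of water-filling in each individual demand cap, which follows from strong concavity of the weighted-log Eisenberg--Gale objective on any active face: any reshuffling of the saturated set stays within the same $\delta$ budget per non-dominant agent, preserving the $(N-1)\delta$ bound for agent $k$. Unequal weights introduce only cosmetic rescaling in the water-level equation (using the weight ratios already tracked in Algorithm~\ref{alg:waterfilling past alloc}), and per-agent standard deviations are folded in by making $\operatorname{std}(X_i^\tau)$ agent-specific inside Chebyshev; both extensions proceed as indicated in Remark~\ref{remark: relax same std assumption by agents}.
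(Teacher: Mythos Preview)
Your sensitivity analysis of the water-filling step is essentially the paper's argument, and your telescoping bound for the dominant agent $k$ matches what the paper calls ``agent $i$ has to compensate for the excess demands of all other agents.'' However, there is a genuine gap: throughout your proposal you conflate the water-filling output $C_i^t$ with the actual allocation $A_i^t$. What Algorithm~\ref{alg:waterfilling past alloc} returns is $\mathbf{C}^t$, and SAFFE then sets $A_i^t = C_i^t\,X_i^t/Y_i^t$ (line~7 of Algorithm~\ref{alg:SAFFE}). Since the denominator $Y_i^t$ \emph{also} differs between the two runs ($\overline{Y}_i^t$ versus $\underbar{Y}_i^t$), a bound on $|\overline{C}_i^t-\underbar{C}_i^t|$ does not directly transfer to $|\overline{A}_i^t-\underbar{A}_i^t|$.

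The paper handles this via an auxiliary lemma (Lemma~\ref{lem diff between Cs}): if $|\overline{C}_i^t-\underbar{C}_i^t|\le D$, then $|\overline{A}_i^t-\underbar{A}_i^t|\le D + 2\sqrt{(T-t)/\xi}\,\operatorname{std}(X_i^\tau)$, the extra $\delta$ coming from the perturbed ratio. This is precisely why case~3 has the constant $4$ rather than $2$, and why cases~1--2 jump from $2(N-1)$ to $2N$. Your bounds of $\delta$ and $(N-1)\delta$ are correct for $C$, but you then assert they hold for $A$ and absorb the slack by saying they ``sit inside'' the stated constants. That slack is not slack: it is exactly consumed by the missing split-formula correction. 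To close the argument you need to insert this $C\to A$ transfer step explicitly; once you do, your proof coincides with the paper's.
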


\begin{lemma}[Balanced demands SAFFE-Oracle]\label{lem bound alloc confidence interval special case}
Let $\overline{A}_i^t$ and $\underbar{A}_i^t$ denote the allocations of SAFFE-Oracle with $Y_{i}^{t} = \overline{Y}_i^t$ and $Y_{i}^{t} = \underbar{Y}_i^t$, respectively. Then we have
\begin{align}
|\overline{A}_i^t - \underbar{A}_i^t | \leq 
\begin{cases}
0 & \text{ if } B^{t}\leq   \sum\limits_{i=1}^{N}\underbar{Y}_i^t,\\
4\sqrt{\frac{ (T-t)}{{\xi}}} \operatorname{std}(X^{\tau}_{i})  & \text{ if }   \sum\limits_{i=1}^{N}\underbar{Y}_i^t \leq B^{t}\leq   \sum\limits_{i=1}^{N}\overline{Y}_i^t,\\
4\sqrt{\frac{ (T-t)}{{\xi}}} \operatorname{std}(X^{\tau}_{i})  & \text{ if }     B^{t}\geq   \sum\limits_{i=1}^{N}\overline{Y}_i^t
\end{cases}
\end{align}
\end{lemma}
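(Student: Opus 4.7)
The plan is to exploit the perfect symmetry induced by the balanced demands assumption $\overline{Y}_i^t = \overline{Y}$ and $\underbar{Y}_i^t = \underbar{Y}$ across agents, and then perform a case split on the budget $B^t$. Because SAFFE-Oracle with identical requests across all agents simply runs water-filling on a perfectly symmetric input, both the upper-bound and lower-bound runs produce a common per-agent allocation; this collapses the subtle water-level shifts that drove the factor of $N$ in the unbalanced case (Lemma \ref{lem bound alloc confidence interval}) down to essentially zero.

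I would then verify the three regimes stated in the lemma as follows. If $B^t \leq N\underbar{Y}$, in both variants the water-level sits at $B^t/N$, below both $\underbar{Y}$ and $\overline{Y}$, so each agent receives $B^t/N$ and the difference is identically zero. If $N\underbar{Y} \leq B^t \leq N\overline{Y}$, the lower-bound variant saturates (each agent gets $\underbar{Y}$) while the upper-bound variant has water-level $B^t/N \in [\underbar{Y},\overline{Y}]$ (each agent gets $B^t/N$); the discrepancy is at most $\overline{Y} - \underbar{Y} = 2\sqrt{(T-t)/\xi}\,\operatorname{std}(X_i^\tau)$, which lies well within the stated $4\sqrt{(T-t)/\xi}\,\operatorname{std}(X_i^\tau)$. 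If $B^t \geq N\overline{Y}$, both variants saturate, giving per-agent allocations $\underbar{Y}$ and $\overline{Y}$ respectively, and again a difference of $\overline{Y}-\underbar{Y}$. In all three regimes the bound claimed by the lemma follows immediately, with a factor-of-two slack that leaves room for the downstream triangle-inequality step used in Theorem \ref{thm special bound oracle vs saffe}.

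The main obstacle is not the arithmetic but a careful treatment of past allocations in Algorithm \ref{alg:waterfilling past alloc}: the agent ordering depends on $(X_i + A_i^0)/w_i$, so even identical $Y$'s do not automatically imply identical allocations once past allocations enter the picture. For the downstream SAFFE-D vs SAFFE-Oracle comparison, past allocations may differ because the two procedures have evolved separately through steps $1,\dots,t-1$. I would address this either by applying the lemma one step at a time with a common past history and then invoking the triangle inequality across time, or by an inductive symmetry argument showing that under the balanced assumption past allocations remain symmetric across agents at every step. A minor technicality is the $(\cdot)^+$ clipping in the SAFFE-D definition of the effective $\underbar{Y}$; this only tightens each per-agent comparison and can be absorbed into the case analysis without changing the constants.
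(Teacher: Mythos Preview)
Your case analysis on $B^t$ versus $N\underbar{Y}$ and $N\overline{Y}$ is exactly the route the paper takes, and your per-case computations are correct --- but for the wrong quantity. What you call ``each agent receives $B^t/N$'', ``saturates at $\underbar{Y}$'', etc., is the water-filling output $C_i^t$ from \eqref{eq:obj_online}, not the SAFFE allocation $A_i^t = C_i^t\,X_i^t / Y_i^t$ from \eqref{eq:split}. Even when $\overline{C}_i^t = \underbar{C}_i^t$, the split ratios $X_i^t/\overline{Y}_i^t$ and $X_i^t/\underbar{Y}_i^t$ differ, so $\overline{A}_i^t$ and $\underbar{A}_i^t$ need not coincide. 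The paper closes this gap with an auxiliary result (Lemma~\ref{lem diff between Cs}) showing
\[
|\overline{A}_i^t - \underbar{A}_i^t| \;\leq\; |\overline{C}_i^t - \underbar{C}_i^t| \;+\; 2\sqrt{\tfrac{T-t}{\xi}}\,\operatorname{std}(X_i^\tau),
\]
and it is this additive conversion cost, applied to your $|\overline{C}-\underbar{C}|\leq 2\sqrt{(T-t)/\xi}\,\operatorname{std}$ in the second and third regimes, that produces the stated $4\sqrt{(T-t)/\xi}\,\operatorname{std}$. So the ``factor-of-two slack'' you noticed is not headroom for a downstream triangle inequality; it is consumed right here by the $C\to A$ conversion. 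With that step inserted, your argument matches the paper's.

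Your concern about past allocations $\widetilde{A}_i^{t-1}$ breaking the symmetry in Algorithm~\ref{alg:waterfilling past alloc} is well taken and in fact more careful than the paper, which treats each step's water-filling as if it had no history. Your proposed inductive symmetry argument (balanced demands imply symmetric past allocations at every step) is the natural way to make this rigorous.
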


\begin{proof}[Proof of Lemma \ref{lem bound alloc confidence interval} and Lemma \ref{lem bound alloc confidence interval special case}]

~\paragraph{\textbf{Case} \boldmath{$B^{t}\leq   \sum_{n=1}^{N}\underbar{Y}_i^t$.} } \unboldmath Agents whose lower bound demands are fully filled by the lower bound solution with $Y_{i}^{t} = \underbar{Y}_i^t$ might
increase the water-level when submitting larger demands, which will cause other agents to have smaller $\overline{C}^{t}_{j}$ in comparison to $\underbar{C}^{t}_{j}$. In the worst-case scenario, where agent $i$'s lower bound demand is greater than all other agents' upper bound demands, 
%and $\sum_{i\neq n}^{N}\left(X_i^t + (\sum_{t'=t+1}^{T} X^{t'}_{i} )_{\text{upper}}\right)<B^{t}$, 
the difference $\underbar{C}^{t}_{i} -\overline{C}^{t}_{i}$ can be at most 
$$\sum_{j\neq i}\left(\overline{Y}_i^t-\underbar{Y}_i^t \right),$$
i.e., agent $i$ has to compensate for the excess demands of all other agents. If $\operatorname{std}(X^{\tau}_{i})$ is equal for all agents, the difference in the worst-case scenario is at most $2(N-1)\sqrt{\frac{(T-t)}{{\xi}}}\operatorname{std}(X^{\tau}_{i})$, which leads to a difference between $\underbar{A}^{t}_{i} -\overline{A}^{t}_{i}$ of at most $2N\sqrt{\frac{ (T-t)}{{\xi}}} \operatorname{std}(X^{\tau}_{i})$ as a consequence of Lemma \ref{lem diff between Cs}. 

~ On the other hand if instead of the worst-case scenario we assume that $\overline{Y}_i^t$ and $\underbar{Y}_i^t$ are the same for all agents (e.g. same demand distributions), then the water-level would not move in SAFFE using $Y_{i}^{t} =\overline{Y}_i^t$ in comparison to SAFFE using $Y_{i}^{t} =\underbar{Y}_i^t$, and thus there would be no change in the allocations resulting from the two problems.

~\paragraph{\textbf{Case} \boldmath{ $\sum_{n=1}^{N}\underbar{Y}_i^t\leq B^{t}\leq \sum_{n=1}^{N}\overline{Y}_i^t$.}} \unboldmath There is enough budget to satisfy all agents' demands under the  lower bound constraint but not eoug budget under the upper bound one. 
%Compared to $C^{t}_{n}$ from \eqref{eq_problem_lower_bound}, $C^{t}_{n}$ in \eqref{eq_problem_upper_bound} can increase by at most $$B^{t}-\sum_{i=1}^{N}\left({X_i^t + (\sum_{t'=t+1}^{T} X^{t'}_{i} )_{\text{lower}}}\right)$$ if only agent $n$ makes different request in \eqref{eq_problem_lower_bound} compared to \eqref{eq_problem_upper_bound}. 
Compared to $\underbar{C}^{t}_{i}$, in the worst-case scenario,  $\overline{C}^{t}_{i}$ can decrease by at most
%$$\sum_{j \neq i}\overline{Y}_i^t - \sum_{j \neq i}\underbar{Y}_i^t -B^{t}$$
$$\sum_{j \neq i}\overline{Y}_i^t - \sum_{j \neq i}\underbar{Y}_i^t $$
%$$\sum_{i\neq n}({X_i^t+(\sum_{t'=t+1}^{T} X^{t'}_{i} )_{\text{upper}} })+({X_n^t+(\sum_{t'=t+1}^{T} X^{t'}_{n} )_{\text{lower}} })-B^{t},$$
 where agent $i$'s lower bound demand is larger than any other agents' upper bound demands, and therefore, the  water-filling algorithm will satisfy the upper bound demands of all other agents on the account of agent $i$. If all agents have equal  $\operatorname{std}(X^{\tau}_{i})$, in the worst-case scenario, the difference  is again at most $2(N-1)\sqrt{\frac{(T-t)}{{\xi}}}\operatorname{std}(X^{\tau}_{i})$. Following  Lemma \ref{lem diff between Cs}, this leads to a difference of at most $2N\sqrt{\frac{ (T-t)}{{\xi}}} \operatorname{std}(X^{\tau}_{i})$ between $\underbar{A}^{t}_{i}$ and $\overline{A}^{t}_{i}$.

~ On the other hand, if we assume that $\sum_{j \neq i}\overline{Y}_i^t$ and $\sum_{j \neq i}\underbar{Y}_i^t$ are the same for all agents, then all agents will receive  an equal share of the surplus, i.e. the difference between $\underbar{C}_i^t$ and $\overline{C}_i^t$  becomes
%$$\frac{1}{N}\left({B^{t}}-\sum_{i=1}^{N}\left({X_i^t + (\sum_{t'=t+1}^{T} X^{t'}_{i} )_{\text{lower}}}\right)\right).$$
$$\frac{1}{N}\left({B^{t}}-\sum_{j \neq i}\underbar{Y}_i^t\right).$$
Given that $B^{t}\leq \sum_{n=1}^{N} \overline{Y}_i^t$, and due to \eqref{eq_requests_upper_lower}, this is at most $2 \sqrt{\frac{(T-t)}{{\xi}}}$ $\operatorname{std}(X^{\tau}_{i})$. As a consequence of Lemma \ref{lem diff between Cs}, this leads to difference of at most $4\sqrt{\frac{(T-t)}{{\xi}}}\operatorname{std}(X^{\tau}_{i})$ between $\underbar{A}_i^t$ and $\overline{A}_i^t$.

~\paragraph{\textbf{Case} \boldmath{ $\sum_{n=1}^{N}\overline{Y}_i^t\leq B^{t}$.}} \unboldmath There is enough budget to satisfy all agents' demands even under the upper bound, so the difference between $\overline{C}^{t}_{i}$ and $\underbar{C}^{t}_{i}$ is purely driven by the difference between lower and upper bound demands, i.e. it equals $\overline{Y}_i^t-\underbar{Y}_i^t$ which is $2\sqrt{\frac{(T-t)}{{\xi}}}\operatorname{std}(X^{\tau}_{n})$. Following Lemma \ref{lem diff between Cs}, this leads to a difference of at most $4\sqrt{\frac{(T-t)}{{\xi}}}\operatorname{std}(X^{t'}_{i})$ between
$\underbar{A}^{t}_{i}$ and $\overline{A}^{t}_{i}$.
\end{proof}

\begin{lemma} \label{lem diff between Cs}
If    %$\vert C^{t}_{n, \eqref{eq_problem_upper_bound}} - C^{t}_{n, \eqref{eq_problem_lower_bound}} \vert \leq 2N\lambda(T-t)\operatorname{std}(X^{t'}_{i})$
$\vert \overline{C}^{t}_{i} - \underbar{C}^{t}_{i} \vert \leq D$, then we have
\begin{align*}
    \vert \overline{A}^{t}_{i} - \underbar{A}^{t}_{i} \vert \leq D+2\sqrt{\frac{(T-t)}{{\xi}}} \operatorname{std}(X^{\tau}_{i}).
\end{align*}
\end{lemma}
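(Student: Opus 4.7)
My plan is to exploit the explicit split formula \eqref{eq:split}, which gives $A_i^t = C_i^t \cdot X_i^t / Y_i^t$, and to track how this map behaves when we replace the nominal denominator $Y_i^t$ by its upper/lower confidence counterparts $\overline{Y}_i^t$ and $\underbar{Y}_i^t$. Writing
\begin{align*}
\overline{A}_i^t - \underbar{A}_i^t
= X_i^t \left( \frac{\overline{C}_i^t}{\overline{Y}_i^t} - \frac{\underbar{C}_i^t}{\underbar{Y}_i^t} \right),
\end{align*}
I would split this as
\begin{align*}
\frac{\overline{C}_i^t}{\overline{Y}_i^t} - \frac{\underbar{C}_i^t}{\underbar{Y}_i^t}
= \frac{\overline{C}_i^t - \underbar{C}_i^t}{\underbar{Y}_i^t}
\; + \; \overline{C}_i^t \cdot \frac{\underbar{Y}_i^t - \overline{Y}_i^t}{\overline{Y}_i^t \underbar{Y}_i^t},
\end{align*}
so the problem reduces to bounding two terms separately.

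For the first term, the triangle inequality and the hypothesis $|\overline{C}_i^t - \underbar{C}_i^t|\leq D$ give a contribution of at most $(X_i^t/\underbar{Y}_i^t)\cdot D$. For the second, factoring gives $(X_i^t \overline{C}_i^t)/(\overline{Y}_i^t \underbar{Y}_i^t)\cdot |\overline{Y}_i^t - \underbar{Y}_i^t|$. At this point I would invoke two natural caps: (i) the water-filling constraint in Algorithm \ref{alg:waterfilling past alloc} enforces $\overline{C}_i^t \leq \overline{Y}_i^t$, so $\overline{C}_i^t/\overline{Y}_i^t \leq 1$; and (ii) since $\underbar{Y}_i^t = X_i^t + \mathbb{E}[\sum_{\tau>t} X_i^\tau] - \sqrt{(T-t)/\xi}\,\operatorname{std}(X_i^\tau)$, we have $X_i^t/\underbar{Y}_i^t \leq 1$ as long as the expected remaining demand dominates the confidence radius, which is the regime of interest. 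Combining these pieces yields
\begin{align*}
|\overline{A}_i^t - \underbar{A}_i^t| \leq D + |\overline{Y}_i^t - \underbar{Y}_i^t|,
\end{align*}
and substituting $|\overline{Y}_i^t - \underbar{Y}_i^t| = 2\sqrt{(T-t)/\xi}\,\operatorname{std}(X_i^\tau)$ (under the equal-std assumption of Section \ref{sect: saffe-d vs saffe-oracle}) gives the stated bound.

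The main obstacle is the benign but necessary side condition $X_i^t \leq \underbar{Y}_i^t$ used to bound $X_i^t/\underbar{Y}_i^t$. If the Chebyshev radius is so wide that $\underbar{Y}_i^t < X_i^t$, the first term can blow up relative to $X_i^t$; however, one can then simply fall back on the trivial bound $|\overline{A}_i^t - \underbar{A}_i^t| \leq X_i^t \leq \mathbb{E}[\sum_{\tau>t}X_i^\tau] + \sqrt{(T-t)/\xi}\operatorname{std}(X_i^\tau)$, absorbed into the same concentration term up to a constant. I would therefore state the lemma either within the standing regime where $\underbar{Y}_i^t \geq X_i^t$ or, equivalently, with the convention that $\underbar{Y}_i^t$ is replaced by $(\,\cdot\,)^+$ (consistent with the SAFFE-D definition) and verify the edge case separately; the core two-term decomposition above is otherwise routine.
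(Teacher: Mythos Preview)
Your approach is essentially the paper's: both start from the split formula $A_i^t=C_i^t X_i^t/Y_i^t$, put $\overline{A}_i^t-\underbar{A}_i^t$ over the common denominator $\overline{Y}_i^t\underbar{Y}_i^t$, and decompose the numerator into a $(\overline{C}-\underbar{C})$ piece and a $(\overline{Y}-\underbar{Y})$ piece, then invoke the water-filling constraint $C\leq Y$. The only difference is \emph{which} cross term is added and subtracted. You effectively insert $\overline{C}_i^t\overline{Y}_i^t$, which forces you to control $X_i^t/\underbar{Y}_i^t$ and hence the side condition you flag as the ``main obstacle''. The paper instead inserts $\underbar{C}_i^t\underbar{Y}_i^t$, obtaining
\[
\overline{C}_i^t\underbar{Y}_i^t-\underbar{C}_i^t\overline{Y}_i^t
=\underbar{Y}_i^t(\overline{C}_i^t-\underbar{C}_i^t)-\underbar{C}_i^t(\overline{Y}_i^t-\underbar{Y}_i^t),
\]
so that after bounding by $X_i^t\underbar{Y}_i^t\big(D+|\overline{Y}_i^t-\underbar{Y}_i^t|\big)$ (using $\underbar{C}_i^t\leq\underbar{Y}_i^t$) and dividing by $\overline{Y}_i^t\underbar{Y}_i^t$, the surviving ratio is $X_i^t/\overline{Y}_i^t$. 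Since $\overline{Y}_i^t=X_i^t+(\text{nonnegative terms})$, this ratio is \emph{always} at most $1$, and no regime assumption or edge-case patch is needed. In short, your argument is correct but the dual pairing removes the only obstacle you identified.
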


\begin{proof}
Firstly, let us note that
\begin{align*}
    X_{i}^{t}\overline{C}^{t}_{i} \underbar{Y}^{t}_{i}-X_{i}^{t}\underbar{C}^{t}_{i} {\overline{Y}^{t}_{i}} &= X_{i}^{t} \underbar{Y}^{t}_{i}(\overline{C}^{t}_{i}-\underbar{C}^{t}_{i})
    + X_{i}^{t}\underbar{C}^{t}_{i}\underbar{Y}^{t}_{i} \\
    & - X_{i}^{t}\underbar{C}^{t}_{i}\underbar{Y}^{t}_{i} - X_{i}^{t}\underbar{C}^{t}_{i}\left( \overline{Y}^{t}_{i} -\underbar{Y}^{t}_{i} \right)
\end{align*}
and hence we have
\begin{align*}
    \left\vert X_{i}^{t}\overline{C}^{t}_{i} \underbar{Y}^{t}_{i}-X_{i}^{t}\underbar{C}^{t}_{i} {\overline{Y}^{t}_{i}} \right\vert &\leq  X_{i}^{t} \underbar{Y}^{t}_{i}\left\vert \overline{C}^{t}_{i}-\underbar{C}^{t}_{i} \right\vert+  X_{i}^{t}\underbar{C}^{t}_{i}\left\vert \overline{Y}^{t}_{i}-\underbar{Y}^{t}_{i} \right\vert\\
    & \leq  X_{i}^{t} \underbar{Y}^{t}_{i} \cdot D + 2X_{i}^{t}\underbar{C}^{t}_{i}\sqrt{\frac{T-t}{{\xi}}}\operatorname{std}(X^{\tau}_{i})\\
    &\leq X_{i}^{t} \underbar{Y}^{t}_{i}\left(D+2\sqrt{\frac{T-t}{{\xi}}}\operatorname{std}(X^{\tau}_{i})\right)
\end{align*}
where in the last inequality we use $\underbar{C}^{t}_{i} \leq \underbar{Y}^{t}_{i}.$

Thus
\begin{align*}
    \vert \overline{A}^{t}_{i} - \underbar{A}^{t}_{i} \vert &= \left\vert \frac{X_{i}^{t}\overline{C}^{t}_{i}}{\overline{Y}^{t}_{i} } - \frac{X_{i}^{t}\underbar{C}^{t}_{i}}{\underbar{Y}^{t}_{i} } \right\vert\\
    &= \left\vert \frac{X_{i}^{t}\overline{C}^{t}_{i} \underbar{Y}^{t}_{i}-X_{i}^{t}\underbar{C}^{t}_{i} \overline{Y}^{t}_{i}}{\overline{Y}^{t}_{i}\underbar{Y}^{t}_{i}}   \right\vert \\
   %& \leq  \frac{X_{i}^{t}\underbar{Y}^{t}_{i}\cdot D}{\overline{Y}^{t}_{i}\underbar{Y}^{t}_{i}} + \frac{X_{i}^{t}\underbar{C}^{t}_{i}\cdot 2\frac{ (T-t)}{\sqrt{\epsilon}} \operatorname{std}(X^{\tau}_{i}) }{\overline{Y}^{t}_{i}\underbar{Y}^{t}_{i}}   \\
   %& \leq \frac{(D+2\frac{ (T-t)}{\sqrt{\epsilon}} \operatorname{std}(X^{\tau}_{i})) X_{i}^{t}\underbar{Y}^{t}_{i}}{\overline{Y}^{t}_{i}\underbar{Y}^{t}_{i}}\\
   & \leq \frac{(D+2\sqrt{\frac{ (T-t)}{{\xi}}} \operatorname{std}(X^{\tau}_{i})) X_{i}^{t}}{\overline{Y}^{t}_{i}}\\
   & \leq D+2\sqrt{\frac{ (T-t)}{{\xi}}} \operatorname{std}(X^{\tau}_{i})
\end{align*}
\end{proof}

% \begin{remark} \label{remark: relax same std assumption by agents}
% The equal standard deviation assumption that for any two agents $i$ and $j$, we have $\operatorname{std}(X^{\tau}_{i}) = \operatorname{std}(X^{\tau}_{j})$ is used in the proofs only to simplify $\sum_{j\neq i} \operatorname{std}(X^{\tau}_{j})=(N-1)\operatorname{std}(X^{\tau}_{j})$. Thus, the assumption can be removed.
% \end{remark}

\section{SAFFE-Oracle Matches Hindsight} \label{app: discrepancy saffe oracle achieves hindsight}
In Appendix~\ref{app:SAFFE-D Vs SAFFE-O}, we have upper bounded the discrepancy between the allocations of SAFFE-D and SAFFE-Oracle. Our next step is to show that SAFFE-Oracle achieves the optimal allocations in hindsight.
\begin{theorem} \label{thm:oracle achieves hindsight}
SAFFE-Oracle achieves optimal allocations in hindsight, i.e. for all $i$ we have $\sum_{t=1}^{T} A^{t, \text{oracle}}_{i} = \widetilde{A_{i}}$, 
where $\widetilde{A_{i}}$ is the solution to the Eisenber-Gale program (\ref{eq:obj_offline0}).
\end{theorem}

\begin{proof}
See Appendix~\ref{app:saffe oracle achieves hindsight}  for a detailed proof, which  relies on the following observations: 1) For $t=1$, total allocations for the current step and the reserved allocations for the future $C^{1}_{i}$, equals the solution $\widetilde{A_{i}}$ of \eqref{eq:obj_offline0}, and 2) SAFFE-Oracle fully distributes the total reserved allocations for future by the end of the horizon $T$.
\end{proof}

% \section{SAFFE-Oracle Matches Hindsight} \label{app:saffe oracle achieves hindsight}
\section{Detailed Proof of Theorem~\ref{thm:oracle achieves hindsight}} \label{app:saffe oracle achieves hindsight}
\begin{lemma}\label{lem: saffe oracle EG}
At t=1, the solution $C_{i}^{1, \text{oracle}}$ to the  SAFFE-Oracle problem coincides with the solution $\tilde{A_{i}}$ to the Eisenber-Gale program (\ref{eq:obj_offline0}) i.e. $C_{i}^{1, \text{oracle}}=\widetilde{A_{i}}$.
\end{lemma}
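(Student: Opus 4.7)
The plan is to write out SAFFE-Oracle's optimization problem at $t=1$ and verify term by term that it coincides with the Eisenberg--Gale program (\ref{eq:obj_offline}) (equivalently (\ref{eq:obj_offline0})).

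First, specialize the definitions at $t=1$. By construction of SAFFE-D (equations (\ref{eq:obj_online})--(\ref{eq:demand+future})), SAFFE-Oracle solves the same program but replaces the stochastic quantity $\mathbb{E}[\sum_{\tau=t+1}^{T} X_i^\tau]$ with the realized future demands and sets the standard-deviation penalty to zero (since an oracle observes demands deterministically, $\operatorname{std}(X_i^\tau)=0$). Hence at $t=1$ we have
\begin{align*}
Y_i^1 \;=\; X_i^1 + \sum_{\tau=2}^{T} X_i^\tau \;=\; \widetilde{X}_i.
\end{align*}
Moreover, no allocation has been made before $t=1$, so the past cumulative allocation term $\widetilde{A}_i^{1}$ appearing in (\ref{eq:obj_online}) is zero, and the available budget at $t=1$ satisfies $B^1 = B$. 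Finally, the indicator $\mathbbm{1}\{Y_i^1>0\}$ becomes $\mathbbm{1}\{\widetilde{X}_i>0\}$, and agents with $\widetilde{X}_i=0$ are automatically assigned $C_i^{1,\text{oracle}}=0$ (with the convention $0 \times -\infty = 0$ introduced in Section \ref{subsec:saffe}), matching the Eisenberg--Gale optimum, which also assigns zero to any agent with no demand.

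Substituting these simplifications into (\ref{eq:obj_online}), SAFFE-Oracle at $t=1$ solves
\begin{align*}
\max_{\mathbf{C}^1} \;& \sum_{i : \widetilde{X}_i>0} w_i \log(C_i^1) \\
\text{s.t.}\;& 0 \le C_i^1 \le \widetilde{X}_i,\quad \sum_{i=1}^{N} C_i^1 \le B,
\end{align*}
which is literally the Eisenberg--Gale program (\ref{eq:obj_offline}) with decision variable renamed from $\widetilde{A}_i$ to $C_i^1$. Since the objective is strictly concave on the (convex, compact) feasible region, its optimizer is unique, and thus $C_i^{1,\text{oracle}} = \widetilde{A}_i$ for every $i$.

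The argument is essentially bookkeeping, so there is no real obstacle; the only point requiring care is making explicit why under the oracle the discounted term $\lambda(t)\operatorname{std}(X_i^\tau)$ in SAFFE-D vanishes and why the expectation in (\ref{eq:demand+future}) collapses to the realized sum, which together yield $Y_i^1 = \widetilde{X}_i$.
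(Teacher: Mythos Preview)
Your proof is correct and follows essentially the same approach as the paper: write out the SAFFE-Oracle optimization at $t=1$, observe that $Y_i^1=\widetilde{X}_i$, $\widetilde{A}_i^1=0$, and $B^1=B$, and conclude that the program is identical to (\ref{eq:obj_offline}). Your version is slightly more careful in justifying why the oracle's expectation collapses to the realized sum and in invoking strict concavity for uniqueness, but the argument is the same.
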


\begin{proof}
At $t=1$, SAFFE-Oracle solves the following problem 

\begin{align*}
    \max_{\{C_i^1\}}&\sum_{i:\sum_{t=1}^{T}X_i^t>0}\log(C_i^1)\\
    \text{s.t. }&0\leq C_i^1\leq \sum_{t=1}^{T}X_i^t,\quad\sum_{n=1}^{N}C_i^1\leq B.\notag
\end{align*}
%where $Z_n^1 = \sum_{t=1}^{T}X_n^t$.
This is equivalent to problem (\ref{eq:obj_offline0}) with $A_{i} = C_{i}^{1}$, and hence, we have $C_{i}^{1, \text{oracle}}=\tilde{A_{i}}$.
\end{proof}

\begin{lemma}\label{lem: oracle distributes all} With SAFFE-Oracle, for  any agent $i$ we have
$$C^{1, \text{oracle}}_{i} = \sum_{t=1}^{T}{A^{t, \text{oracle}}_{i}}$$
i.e. its future allocation from the first round is allocated in full by the end of the $T^\text{th}$ round.
\end{lemma}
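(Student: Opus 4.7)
The plan is to reduce the SAFFE-Oracle subproblem at every time step $t$ to the \emph{same} Eisenberg--Gale program~\eqref{eq:obj_offline0}, modulo a change of variables, so that the hindsight solution $\widetilde A_i$ remains each agent's ``target'' throughout the horizon. Concretely, define $D_i^t := \widetilde A_i^t + C_i^{t,\text{oracle}}$, the cumulative amount already committed to agent $i$ up to and including the reserve held at step $t$. The invariant I would prove by induction on $t$ is $D_i^t = \widetilde A_i$ for every $t \in \{1,\dots,T\}$. Evaluating the invariant at $t=T$, where $Y_i^T = X_i^T$ forces $A_i^{T,\text{oracle}} = C_i^{T,\text{oracle}}$ via~\eqref{eq:split}, immediately yields $\sum_{t=1}^T A_i^{t,\text{oracle}} = \widetilde A_i^T + C_i^{T,\text{oracle}} = \widetilde A_i = C_i^{1,\text{oracle}}$, which is exactly the claim.

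The base case $t=1$ is Lemma~\ref{lem: saffe oracle EG} combined with $\widetilde A_i^1 = 0$. For the inductive step I would recast~\eqref{eq:obj_online} at step $t+1$ in the variables $D_i^{t+1}$: the objective becomes $\sum_i w_i \log D_i^{t+1}$; the constraint $C_i^{t+1} \le Y_i^{t+1}$ becomes $D_i^{t+1} \le \widetilde A_i^{t+1} + Y_i^{t+1}$; and the budget constraint telescopes, since $B^{t+1} + \sum_i \widetilde A_i^{t+1} = B$, to $\sum_i D_i^{t+1} \le B$. Because SAFFE-Oracle uses the realised future demands and the split~\eqref{eq:split} guarantees $A_i^{\tau,\text{oracle}} \le X_i^\tau$, one verifies $\widetilde A_i^{t+1} + Y_i^{t+1} \le \widetilde X_i$. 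Thus the step-$(t+1)$ feasible region in $D$-coordinates is contained in the step-$1$ region, the two objectives coincide, and a short check using the induction hypothesis (together with $A_i^{t,\text{oracle}} \le C_i^{t,\text{oracle}}$ and $C_i^{t,\text{oracle}} \le Y_i^t$) shows that $D = \widetilde A$ is itself feasible at step $t+1$. Strict concavity of $\sum_i w_i \log(\cdot)$ then pins down the unique optimum $D_i^{t+1} = \widetilde A_i$, closing the induction.

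The principal technicality I anticipate is the boundary case $Y_i^t = 0$: the indicator in~\eqref{eq:obj_online} excludes agent $i$, and the ratio in~\eqref{eq:split} is formally $0/0$. I would handle this by noting that $Y_i^t = 0$ can only occur at steps strictly after agent $i$'s last positive demand $t^\star$; the inductive step applied at $t^\star$ (where $Y_i^{t^\star} = X_i^{t^\star}$) already yields $\widetilde A_i^{t^\star+1} = \widetilde A_i$, so adopting the natural convention $A_i^t = C_i^t = 0$ whenever $Y_i^t = 0$ preserves the invariant for all subsequent $t$. Modulo this bookkeeping, the inductive argument above gives the lemma.
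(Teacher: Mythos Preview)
Your proposal is correct and follows essentially the same approach as the paper: both establish the invariant that the cumulative past allocation plus the current reserve, $\sum_{\tau=1}^{t-1}A_i^{\tau,\text{oracle}}+C_i^{t,\text{oracle}}$, stays equal to the hindsight solution $\widetilde A_i$, by arguing that the feasible region of \eqref{eq:obj_online} in these shifted coordinates only shrinks from step to step while the hindsight optimum remains feasible. Your presentation is slightly tidier---you make the change of variables to $D_i^t$ explicit, you invoke strict concavity to pin down uniqueness, and you address the $Y_i^t=0$ boundary case---whereas the paper compares step $t$ to step $t-1$ and then writes out the resulting product formula for $A_i^{t,\text{oracle}}$; but the underlying idea is identical.
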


\begin{proof}
First, we prove that for  $t=2, \dots, T$ we have
$$\sum_{\tau=1}^{t-1} A^{\tau, \text{oracle}}_{i} + C^{t, \text{oracle}}_{i} = \sum_{\tau=1}^{t-2} A^{\tau, \text{oracle}}_{i} + C^{t-1, \text{oracle}}_{i}$$
%$$\widetilde{A}_n^{t}+C_n^t = \widetilde{A}_n^{t-1}+C_n^{t-1},$$ 
i.e. the past plus future allocation from the previous step is preserved in the following step. Recall that in each step we are solving
%%\begin{align}
%%    \max_{\{C_n^t\}}&\sum_{n:Z_n^t>0}\log(\widetilde{A}_n^{t}+C_n^t)\label{eq:fair_online_discount}\\
%%    \text{s.t. }&0\leq C_n^t\leq Z_n^t,\quad\sum_{n=1}^{N}C_n^t\leq B^t.\notag
%%\end{align}
%\begin{align}
%    \max_{\{C_i^t\}}&\sum_{i:\sum_{\tau=t}^{T}X_n^{\tau}>0}\log(\widetilde{A}_i^{t}+C_i^t)\label{eq:fair_online_discount}\\
%    \text{s.t. }&0\leq C_i^t\leq \sum_{\tau=t}^{T}X_i^{\tau},\quad\sum_{n=1}^{N}C_i^t\leq B^t.\notag
%\end{align}
\begin{align}
    \max_{\{C_i^t\}}&\sum_{i:\sum_{\tau=t}^{T}X_n^{\tau}>0}\log(\sum_{\tau=1}^{t-1}A^{\tau, \text{oracle}}_{i}+C_i^t)\label{eq:fair_online_discount}\\
    \text{s.t. }&0\leq C_i^t\leq \sum_{\tau=t}^{T}X_i^{\tau},\quad\sum_{n=1}^{N}C_i^t\leq B^t.\notag
\end{align}
%where $Z_n^t = \sum_{t'=t}^{T}X_n^{t'}$ and 
where the sequence of future demands is known to the oracle. 
It is easy to see that %$\widetilde{A}_n^{t}+C_n^t \leq \widetilde{A}_n^{t-1}+C_n^{t-1}$, 
$$\sum_{\tau=1}^{t-1} A^{\tau, \text{oracle}}_{i} + C^{t, \text{oracle}}_{i} \leq \sum_{\tau=1}^{t-2} A^{\tau, \text{oracle}}_{i} + C^{t-1, \text{oracle}}_{i}$$
because the constraints in step $t$ are tighter. If this didn't hold, the  previous round solution %$C_n^{t-1}$ 
$C^{t-1, \text{oracle}}_{i}$ would not be the maximizing solution in step $t-1$. %Can $\widetilde{A}_n^{t}+C_n^t$ achieve its upper bound?
It remains to note that $\sum_{\tau=1}^{t-1} A^{\tau, \text{oracle}}_{i} + C^{t, \text{oracle}}_{i}$ can achieve its upper bound, because compared to the previous step $t-1$, the remaining budget was reduced by exactly %$\sum_{n=1}^{N}\left(\widetilde{A}_{i}^{t}-\widetilde{A}_i^{t-1}\right)$ 
$$\sum_{n=1}^{N}\left( \sum_{\tau=1}^{t-1}A^{\tau, \text{oracle}}_{i}-\sum_{\tau=1}^{t-2}A^{\tau, \text{oracle}}_{i} \right).$$
Therefore, 
$$C_i^{t, \text{oracle}} = C_i^{t-1, \text{oracle}} - \left(\sum_{\tau=1}^{t-1}A^{\tau, \text{oracle}}_{i}-\sum_{\tau=1}^{t-2}A^{\tau, \text{oracle}}_{i}\right)$$ satisifes the constraints. This means that $C_i^{t, \text{oracle}} = C_i^{t-1, \text{oracle}} - A^{t-1, \text{oracle}}_{i}$, i.e., the current plus future allocation in step $t$ equals its counterpart in  step $t-1$ minus the allocation in step $t-1$. In particular, we get
\begin{align}
   C^{2, \text{oracle}}_{i} &= C^{1, \text{oracle}}_{i}-A^{1, \text{oracle}}_{i} \\
   & = C^{1, \text{oracle}}_{n} - \frac{X^{1}_{i}}{\sum_{\tau=1}^{T}X^{\tau}_{i}}C^{1, \text{oracle}}_{i} \\
   & = \left(1 -\frac{X^{1}_{n}}{\sum_{\tau=1}^{T}X^{\tau}_{n}} \right)C^{1, \text{oracle}}_{i},\\
   C^{3, \text{oracle}}_{i} &= C^{2, \text{oracle}}_{i}-A^{2, \text{oracle}}_{i} \\
   & = \left(1-\frac{X^{2}_{n}}{\sum_{\tau=2}^{T}X^{\tau}_{n}} \right) C^{2, \text{oracle}}_{i}\\
   &= \left(1-\frac{X^{2}_{n}}{\sum_{\tau=2}^{T}X^{\tau}_{n}} \right)\left(1 -\frac{X^{1}_{n}}{\sum_{\tau=1}^{T}X^{\tau}_{n}} \right)C^{1, \text{oracle}}_{i}
\end{align}
and in general we have
\begin{align}
  C^{t, \text{oracle}}_{i} = \left(1-\frac{X^{t-1}_{i}}{\sum_{\tau=t-1}^{T}X^{\tau}_{i}} \right)\cdots\left(1 -\frac{X^{1}_{i}}{\sum_{\tau=1}^{T}X^{\tau}_{i}} \right)C^{1, \text{oracle}}_{i}
\end{align}

and 

\begin{align}
  &A^{t, \text{oracle}}_{n} = \notag\\
  &\left(1-\frac{X^{t-1}_{i}}{\sum_{\tau=t-1}^{T}X^{\tau}_{i}} \right)\cdots\left(1 -\frac{X^{1}_{i}}{\sum_{\tau=1}^{T}X^{\tau}_{i}} \right)\frac{X^{t}_{i}}{\sum_{\tau=t}^{T}X^{\tau}_{i}}C^{1, \text{oracle}}_{i}
\end{align}
\end{proof}

\begin{proof}[Proof of Theorem \ref{thm:oracle achieves hindsight}]
As a consequence of Lemmas \ref{lem: saffe oracle EG} and \ref{lem: oracle distributes all}, we have $\tilde{A}^{t}_{i} = \sum_{t=1}^{T}{A^{t, \text{oracle}}_{i}}$ for every $i$. This completes the proof.
\end{proof}

\section{Detailed Proof of Theorem \ref{thm:final gap}}\label{app:detailed proof final thm}

\begin{proof}[Proof of Theorem \ref{thm:final gap}]
Firstly, let us note that Theorem \ref{thm:oracle achieves hindsight} guarantees that 
$$\Delta\mathbf{A}^\text{max}=\mathbb{E}\left[ \max_{i} \left\vert \sum_{t=1}^{T} A^{t, \text{SAFFE-D}} - \sum_{t=1}^{T} A^{t, \text{oracle}} \right\vert \right],$$
as SAFFE-Oracle achieves the hindsight solution.
It remains to employ Theorems \ref{thm:bound oracle vs saffe} and \ref{thm:special bound oracle vs saffe} together with the triangle inequality in order to obtain an upper bound on the right hand side. In the case of unbalanced demands, this results in  $\Delta\mathbf{A}^\text{max} \leq N \frac{T^{3/2}}{\sqrt{\xi}} \operatorname{std}(X^{t}_{i})$. In the case of balanced demands, we have $\Delta\mathbf{A}^\text{max} \leq \frac{2T^{3/2}}{\sqrt{\xi}} \operatorname{std}(X^{t}_{i})$
\end{proof}

\begin{remark}
As $\Delta\mathbf{A}^\text{mean}\leq\Delta\mathbf{A}^\text{max}$, Theorem \ref{thm:final gap} also provides an upper bound on $\Delta\mathbf{A}^\text{mean}$.
\end{remark}

\section{SAFFE  as a generalization of HOPE-Online} \label{app:SAFFE-hope}
% \begin{remark}\label{rmk:hope}
The heuristic algorithm HOPE-Online in \cite{sinclair2020sequential} coincides with SAFFE under a simplified demand process. HOPE-Online is designed for a setting where the supplier visits a set of agents sequentially in order. SAFFE generalizes the allocation algorithm to a setup where agents can arrive simultaneously and several times over the horizon $T$. Similar to SAFFE-D which improves SAFFE using the uncertainty of future demand estimates, Guarded-HOPE proposed in \cite{sinclair2022sequential} improves HOPE-Online and achieves the correct trade-off between a measure of fairness (envy-freeness) and resource leftover by learning a ``lower guardrail" on the
optimal solution in hindsight. We highlight that Guarded-HOPE is designed for the setting where agents are not making repeated demands over the horizon i.e. each individual agent has a request at most once during the horizon. In that sense, our SAFFE and SAFFE-D represent generalizations as we allow for agents with multiple demands. In our analysis of the optimality of SAFFE-D, we rely on concentration inequalities for deviation of future demands from their expected values, which is in spirit similar to the optimality analysis of the guardrail approach in \cite{sinclair2022sequential}.
% \end{remark}

\section{Extension of Guarded-HOPE to Our Setting} \label{app:gaurded-hope}
Since Guarded-HOPE was designed for a setting where in each time-step a number of individuals of the same type  arrive, we slightly modify it to be applicable to our setting in Algorithm~\ref{alg:GuardedHOPE}.
The key components of the algorithm are the upper and lower guardrails defined in lines 7 and 10. Specifically, $\underline{\mathbf{X}},\overline{\mathbf{X}}\in\mathbb{R}^N$ are high-confidence lower and upper bounds on the future demands, and $\overline{\mathbf{A}},\underline{\mathbf{A}}\in\mathbb{R}^N$, which \cite{sinclair2020sequential} refers to as upper and lower guardrails, respectively, are the optimal hindsight allocations under $\underline{\mathbf{X}},\overline{\mathbf{X}}$. When the demands $\mathbf{X}^t$ are revealed in time step $t$, the condition in line 13 first checks if the budget is insufficient to even allow an allocation according to the lower guardrail. If so, the budget is allocated right away and none is reserved. The condition in line 15 checks if the current demands can be allocated according to the upper guardrail assuming that the remaining budget is still enough to allow for allocations according to the lower guardrail for anticipated future demands. If so, the the upper guardrail is allocated to the agents with demands at the current step. Otherwise, the lower guardrail is allocated.

\begin{algorithm}
\caption{Guarded-HOPE (Modified Compared to \protect\cite{sinclair2022sequential})}
\label{alg:GuardedHOPE}
\begin{algorithmic}[1]
\STATE{Input: number of agents $N$, resource budget $B$, demand vectors $\mathbf{X}^1,\dots \mathbf{X}^T\in\mathbb{R}^N$,  demand distributions  $P_{\mathbf{X}_1},\dots,P_{\mathbf{X}_N}$, bound on envy $L_T$}
\STATE{Output: allocation vectors $\mathbf{A}^1,\dots,\mathbf{A}^T\in\mathbb{R}^N$}
% \STATE{Define confidence bound $\text{CONF}^t_i = \sqrt{\text{std}(X_i^t) \mathbb{E}[X_i^t](T-t+1)} $}
\STATE{Define confidence bound $\text{CONF}_i = \sqrt{\text{std}(X_i^1) \mathbb{E}[X_i^1](T-1)} $}
\STATE{$\mathbb{E}X_i \coloneqq\sum_{\tau=1}^T \mathbb{E}[X_i^\tau]$}
\STATE{$c_i = L_t(1 +  \text{CONF}^t_i/\mathbb{E}X_i) - \text{CONF}^t_i/ \mathbb{E}X_i$ for all $i=1,\dots,N$}
\STATE{Solve $\overline{\mathbf{A}}$ for $\underline{X}\coloneqq \mathbb{E}X(1-c)$ using Algorithm~\ref{alg:waterfilling-base}}
\STATE{$\overline{A}_i \leftarrow \overline{A}_i/\underline{X}_i$}
\STATE{$\gamma_i=\text{CONF}^t_i/ \mathbb{E}X_i $}
\STATE{Solve $\underline{\mathbf{A}}$ for $\overline{X}\coloneqq \mathbb{E}X(1+\gamma)$ using Algorithm~\ref{alg:waterfilling-base}}
\STATE{$\underline{A}_i \leftarrow \underline{A}_i/\overline{X}_i$}
\FOR{For $t=1,\dots,T$}
\STATE{Define confidence bound $\text{CONF}^t_i = \sqrt{\text{std}(X_i^t) \mathbb{E}[X_i^t](T-t)} $}
\IF{$B \leq \sum_{i=1}^N X_i^t \underline{A}_i$} 
\STATE{$A_i^t =  \mathbbm{1}\{X_i^t>0\}\times B/\Big(\sum_{i=1}^N \mathbbm{1}\{X_i^t>0\}\Big)$ for $i=1,\dots,N$}
\ELSIF{$B \geq \sum_{i=1}^N X_i^t \overline{A}_i + \sum_{i=1}^N\underline{A}_i \Big(\mathbb{E}[\sum_{\tau=t}^T X_i^\tau]+\text{CONF}_i^t\Big)$}
\STATE{$A_i^t = \mathbbm{1}\{X_i^t>0\}\times X_i^t \overline{A}_i $ for $i=1,\dots,N$}
\ELSE
\STATE{$A_i^t = \mathbbm{1}\{X_i^t>0\}\times X_i^t \underline{A}_i $ for $i=1,\dots,N$}
\ENDIF
\STATE{$B \leftarrow B-\sum_{i=1}^NA_i^t$}
\ENDFOR
 \end{algorithmic}
\end{algorithm}

\section{Additional Experiments}\label{app:experiments}

\subsection{Scaling System Parameters}\label{app:exp-scaling}
SAFFE-D is compared to the baselines in terms of utilization and fairness metrics in Fig.~\ref{fig:scaling plots}, as different system parameters vary. Unless otherwise stated, the settings have $N=50$ agents, budget size $0.5$, time horizon $T=40$, and there are $2$ expected arrivals per-agent. In all experiments, we observe that SAFFE-D is more efficient and more fair compared to other methods, i.e., it achieves higher utilization, and lower $\Delta$Log-NSW,  $\Delta\mathbf{A}^\text{mean}$ and $\Delta\mathbf{A}^\text{max}$. In Fig.~\ref{fig:scale-N}, we observe that as the number of agents increases over the same horizon, the algorithms initially achieve higher utilization and lower $\Delta$Log-NSW, which eventually levels out. However, in terms of $\Delta\mathbf{A}^\text{max}$ fairness which measures the  allocation difference with respect to hindsight for the worst agent, SAFEE-D is the least affected across different number of agents, while all other algorithms become worse.  Fig.~\ref{fig:scale-T} shows the metrics for varying horizon $T$ while having the same number of expected arrivals, i.e., when the arrivals are more spread out across time. In this setting, utilization and fairness metrics seem relatively unaffected across all methods.

In Fig.~\ref{fig:scale-pfactor}, we compare the algorithms as the number of arrivals over the horizon increases. We observe that SAFFE-D is able to use all the budget and match the fair hindsight allocations when it is expected to have  more than $5$ arrivals per-agent over $T=40$. SAFFE is able to match the performance of SAFFE-D in terms of utilization and $\Delta$Log-NSW as the arrivals become denser. As discussed in Appendix~\ref{app:SAFFE-hope},  HOPE-Online and Guarded-HOPE are designed for settings with a single per-agent arrival. As expected, when there are several arrivals per agents, they are not able to match SAFFE-D or SAFFE since they do not account for an agent's past allocations when distributing the budget.  

%The advantage of SAFFE-D in comparison to other algorithms is that it prevents over-reserving for future and thus achieves  both higher utilization and log-NSW. In contrast, SAFFE does not impose the regularization term which potentially results in overestimating future demands. Hope-Guardrail is designed to find a balance between utilization and envy, which does not necessarily mean balance in terms of Log-NSW.

\begin{figure*}[!h]
\centering
\begin{subfigure}{\linewidth} \centering
 \includegraphics[width=\linewidth]{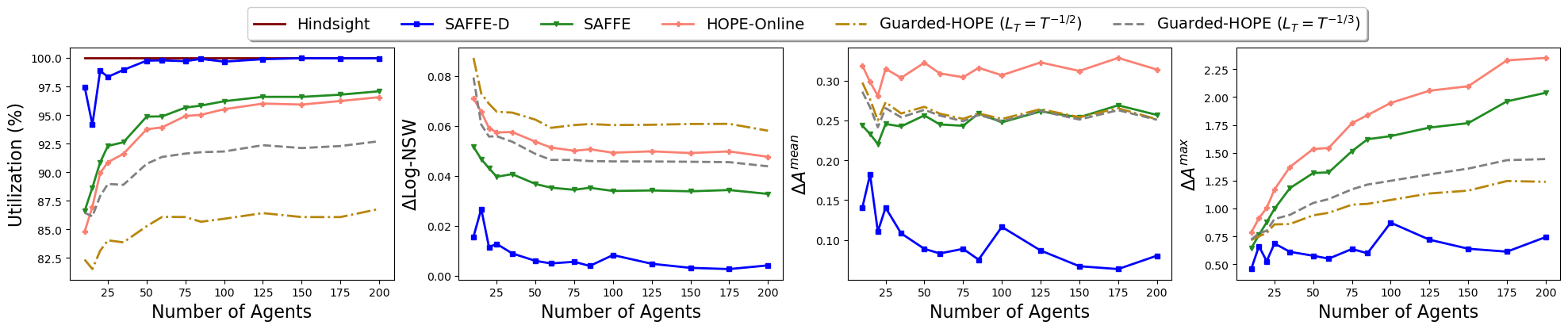}
\caption{Different number of agents $N$.}
\label{fig:scale-N}
\end{subfigure}%
\\
    %  \hfill
\begin{subfigure}{\linewidth} \centering
\includegraphics[width=\linewidth]{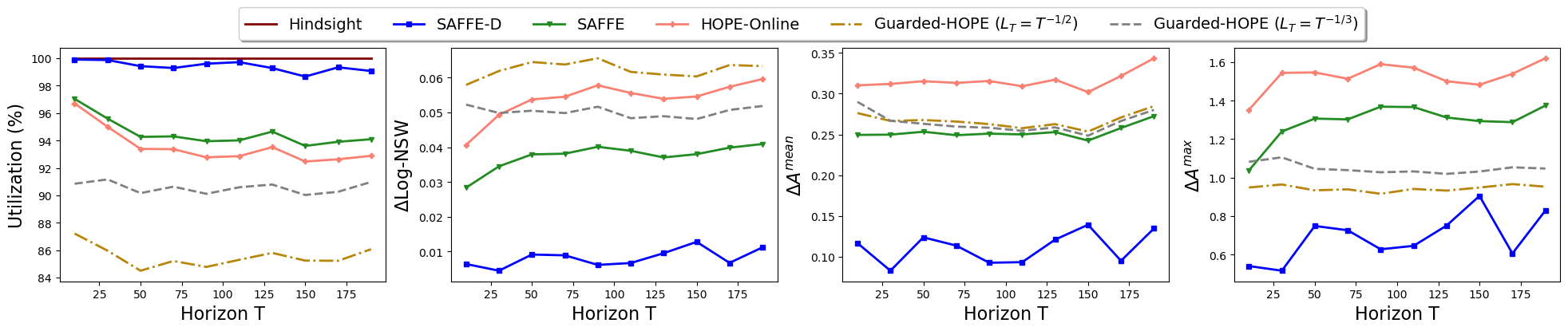}
\caption{Different horizons $T$.}
\label{fig:scale-T}
\end{subfigure}%
\\
\begin{subfigure}{\linewidth} \centering
\includegraphics[width=\linewidth]{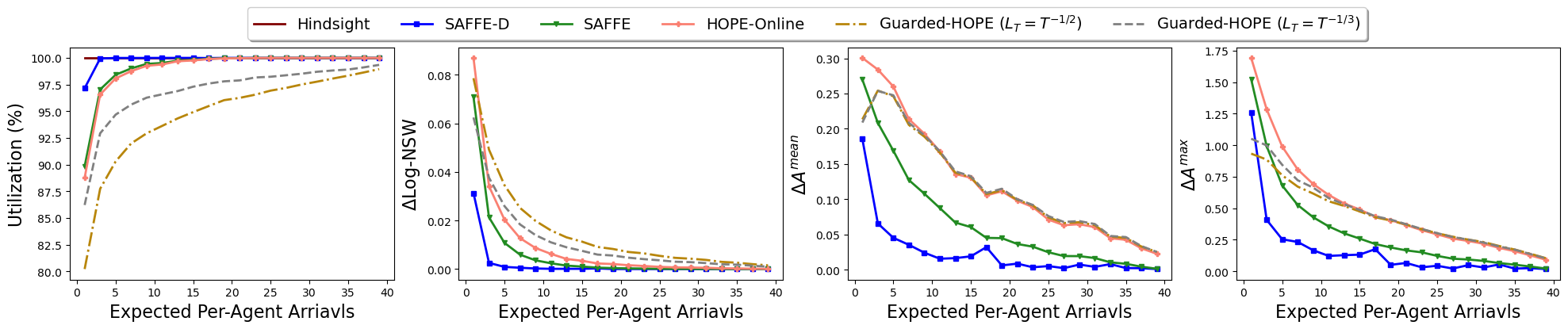}
\caption{Different per-agent expected arrivals.}
\label{fig:scale-pfactor}
\end{subfigure}%

\caption{Symmetric Setting : SAFFE-D performs close to Hindsight and outperforms other methods by achieving higher utilization and lower $\Delta$Log-NSW, $\Delta\mathbf{A}^\text{mean}$ and $\Delta\mathbf{A}^\text{max}$.}
\label{fig:scaling plots}
\end{figure*}

\subsection{Non-Symmetric Demands}\label{app:exp-NH}
Fig.~\ref{fig:NH-demands} shows how SAFFE-D compares to other baselines in terms of allocations using the Non-symmetric Demands Setting, where some agents request larger demands earlier, while others have larger requests later or have no preference.  We observe that  SAFFE-D and SAFFE achieve lower $\Delta\mathbf{A}^\text{mean}$, and have low variability across the groups on average. When considering the worst-case agent, SAFFE-D is less fair to agents that have larger demands earlier in the horizon, as it reserves the budget early on  accounting for future expected demands. However, Uniform and  More-Late agents receive allocations closer to hindsight compared to  other algorithms. %We remark that SAFFE-D is optimized for achieving higher expected log-NSW, and is not explicitly concerned with the worst-case agents.

\begin{figure}[!h]
\centering
\includegraphics[width=\linewidth]{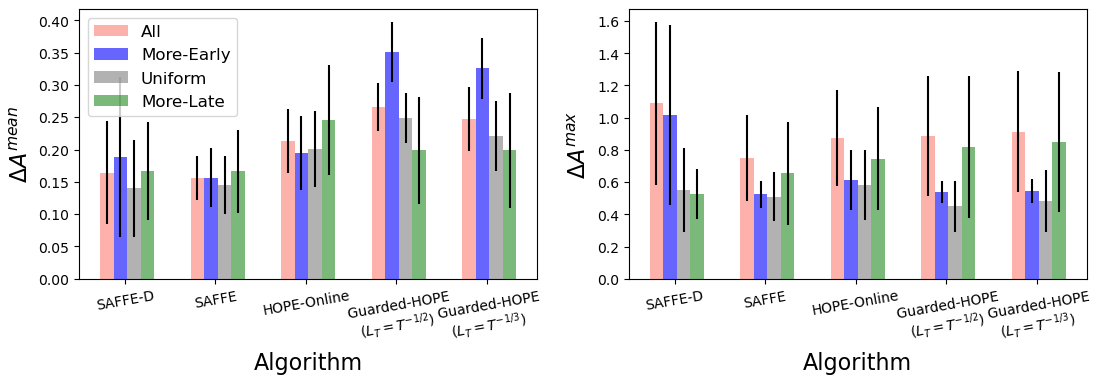}
\caption{Non-symmetric Demands Setting: SAFFE-D allocates uniformly across agents with different arrival patterns on average, while SAFFE outperforms in the worst case.}
\label{fig:NH-demands}
\end{figure}

\subsection{Real Data}\label{app:exp-real}
In order to study how SAFFE-D performs on real data with sparser arrivals over the horizon, we enforce less per-agent requests by erasing each arrival with probability $p$. Since $T=7$, setting $p=\frac{2}{7}$ corresponds to two weekly demands per store. As observed from Fig.~\ref{fig:realdata-B}, SAFFE-D outperforms the other methods in terms of efficiency and fairness. We remark that while with the uniform random erasures, we have imposed Bernoulli arrivals similar to the demand processes described in Sec.~\ref{sec:experiments}, the results presented here are based on the real demand sizes and reflect using imperfect estimates that are computed based on real data. However, further experiments on real datasets are needed to compare the performance of SAFFE-D under more general arrival processes that are correlated across time.

\begin{figure*}[!h]
\centering
 \includegraphics[width=\linewidth]{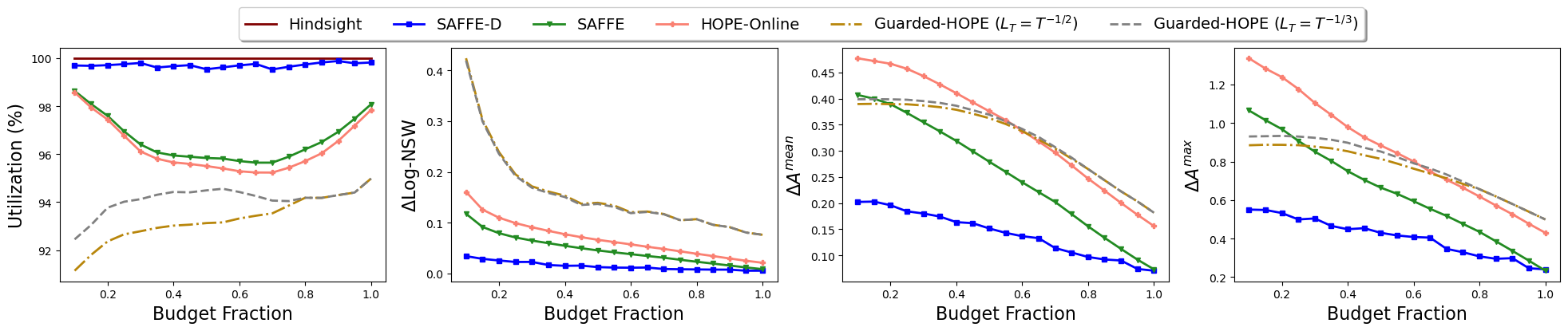}
\caption{Experiments on real data for different budgets when $p=\frac{2}{7}$.  SAFFE-D performs close to Hindsight  achieving high utilization and low $\Delta$Log-NSW, $\Delta\mathbf{A}^\text{mean}$ and $\Delta\mathbf{A}^\text{max}$.}
\label{fig:realdata-B}
\end{figure*}

\section{Training of Reinforcement Learning Policy}
\label{app:rldetails}

We implement a variation of the Soft Actor-Critic method~\cite{HaarnojaZAL18} with clipped double-Q learning~\cite{SpinningUp2018} as well as automatic entropy~\cite{HaarnojaZAL18b} tuning. We train the SAC policy for $100$K episodes for $5$ random seeds, and then evaluate the $5$ checkpoints under another $200$ rollouts in order to compare with other baselines. We report the average performance with one standard deviation over the four metrics of Log-NSW, Utilization, $\Delta \mathbf{A}^\text{mean}$ and $\Delta \mathbf{A}^\text{max}$ in Table~\ref{tb:rl}.

The policy network architecture consists of three Fully Connected (FC) layers followed by one output layer, where each FC layer has 256 neurons with ReLU  activation functions. Since the MDP state is time dependent, to prevent the input state vector size from growing over the time horizon $T$ during training, we represent the state vector as $\langle \widetilde{\mathbf{X}}^{t},\widetilde{\mathbf{A}}^{t}, B^t, t\rangle$, where 
$$ \widetilde{X}_i^t = \sum_{\tau=1}^t {X}_i^{\tau}, \quad \widetilde{A}_i^t = \sum_{\tau=1}^t {A}_i^{\tau}. $$
Note that the state does not include any demand distribution information
(e.g., expectation). We guarantee the step-wise budget constraint $\sum_{i=1}^N A^t_i \leq B^t$, by designing the output layer to have two heads: one head with a Sigmoid activation function, which outputs a scalar determining the step-wise budget utilization ratio $u^t\in[0,1]$; and another head with a Softmax activation function, which outputs an allocation ratio vector, $\mathbf{z}^t\in\mathbb{R}^{N}$ : $\sum_{i=1}^{N}z_i^t=1$ over the agents. The final allocation for each agent is  determined by $A_i^t=z_i^t u^t B^t$.

% growing of input vector's size through the horizon $T$, we represent the state vector as $\langle \sum_{1}^t\mathbf{\Y}^{t},\sum_{1}^{t-1}\mathbf{\A}^{t}, B^t, t\rangle$ in the training. Meanwhile, in order to guarantee the constraint, $\mathbf{A}^t:\sum_{i=1}^NA^t_i \le B^t$, the output layer has two heads: one head with a sigmoid activation function outputs a scalar as a budget utilization ratio $u^t\in[0,1]$; and the other head with a softmax activation function outputs an allocation ratio vector, $\mathbf{z}^t\in\mathbb{R}^{N}$ : $\sum_{i=1}^{N}z_i^t=1$ , over agents. The allocation for each agent is finally determined by $A_i^t=z_i^t u^t B^t$.

We perform hyper-parameter tuning using a grid search over a set of candidates. The results in Table~\ref{tb:rl} are achieved by using the hyper-parameters summarized in Table~\ref{tb:hyper-parameter}.

\begin{table}[h!]
    \caption {SAC Hyper-parameters used in experiments.}
    \centering
    \resizebox{\linewidth}{!}{%
    \begin{tabular}{lcc}
    \toprule
        Parameter & Sparse & Dense  \\
          & ($2$ arrivals) & ($4$ arrivals) \\
    \cmidrule(r){2-3}
        Learning rate & $3 \cdot 10^{-4}$ & $3\cdot10^{-4}$ \\
        Replay-buffer size & $8\cdot10^5$ & $10^6$ \\
        Batch size & 512 & 512 \\
        Target smoothing coefficient$(\tau)$ & 0.005 & 0.005\\
        Update interval (step) & 5 & 1\\
        Update after (step) & $10^5$ & $5\cdot10^5$ \\
        Uniform-random action selection (step) &$10^5$ & $5\cdot10^5$ \\
    \bottomrule
    \end{tabular}
    \label{tb:hyper-parameter}
    }
\end{table}

\end{document}